



\documentclass[sigconf]{aamas} 


\usepackage{balance} 
\usepackage[utf8]{inputenc} 
\usepackage[T1]{fontenc}    
\usepackage{hyperref}       
\usepackage{url}            
\usepackage{booktabs}       
\usepackage{amsfonts}       
\usepackage{nicefrac}       
\usepackage{microtype}      
\usepackage{xcolor}         

\usepackage{subcaption}
\usepackage[]{graphicx}
\usepackage{bbm}
\usepackage{amsmath}
\usepackage{mathtools,algorithm}
\usepackage[noend]{algpseudocode}
\usepackage[symbol]{footmisc}
\usepackage{graphicx}
\graphicspath{ {./figures/} }

\usepackage{titlesec}

\theoremstyle{proposition}

\theoremstyle{theorem}
\newtheorem{theorem}{Theorem}[section]

\theoremstyle{definition}
\newtheorem{definition}{Definition}[section]

\theoremstyle{lemma}
\newtheorem{lemma}{Lemma}[section]

\theoremstyle{fact}
\newtheorem{fact}{Fact}[section]

\newcommand{\E}{\mathbb{E}}
\usepackage{enumitem}
\usepackage[font={small}]{caption}


\setcopyright{ifaamas}
\acmConference[AAMAS '22]{Proc.\@ of the 21st International Conference
on Autonomous Agents and Multiagent Systems (AAMAS 2022)}{May 9--13, 2022}
{Online}{P.~Faliszewski, V.~Mascardi, C.~Pelachaud,
M.E.~Taylor (eds.)}
\copyrightyear{2022}
\acmYear{2022}
\acmDOI{}
\acmPrice{}
\acmISBN{}



\acmSubmissionID{579}


\title{Centralized Model and Exploration Policy for Multi-Agent RL}


\author{Qizhen Zhang}
\affiliation{
  \institution{University of Toronto, Vector Institute}
  \city{Toronto}
   \country{Canada}}
  \email{qizhen@cs.toronto.edu}

\author{Chris Lu}
\affiliation{
  \institution{University of Oxford}
  \city{Oxford}
  \country{UK}}
  \email{christopher.lu@exeter.ox.ac.uk}

\author{Animesh Garg}
\affiliation{
\institution{University of Toronto, Vector Institute, NVIDIA}
\city{Toronto}
\country{Canada}}
\email{garg@cs.toronto.edu}

\author{Jakob Foerster}
\affiliation{
  \institution{University of Oxford}
  \city{Oxford}
  \country{UK}}
  \email{jakob.foerster@eng.ox.ac.uk}


\begin{abstract}

Reinforcement learning (RL) in partially observable, fully cooperative multi-agent settings (Dec-POMDPs) can in principle be used to address many real-world challenges such as controlling a swarm of rescue robots or a team of quadcopters. However, Dec-POMDPs are significantly harder to solve than single-agent problems, with the former being \textit{NEXP-complete} and the latter, MDPs, being just P-complete. Hence, current RL algorithms for Dec-POMDPs suffer from poor sample complexity, which greatly reduces their applicability to practical problems where environment interaction is costly. Our key insight is that using just a \textit{polynomial} number of samples, one can learn a \emph{centralized} model that generalizes across different policies. We can then optimize the policy within the learned model instead of the true system, without requiring additional environment interactions. We also learn a centralized exploration policy within our model that learns to collect additional data in state-action regions with high model uncertainty.  We empirically evaluate the proposed model-based algorithm, MARCO\footnote{MARCO is short for \textbf{M}ulti-\textbf{A}gent \textbf{R}L with \textbf{C}entralized M\textbf{o}dels and Exploration \\ Code available at \texttt{https://github.com/irenezhang30/MARCO/}.}, in three cooperative communication tasks, where it improves sample efficiency by up to 20x. Finally, to investigate the theoretical sample complexity, we adapt an existing model-based method for tabular MDPs to Dec-POMDPs, and prove that it achieves polynomial sample complexity.
\end{abstract}



\keywords{Model-Based Multi-Agent Reinforcement Learning; Dec-POMDPs}


         
\newcommand{\BibTeX}{\rm B\kern-.05em{\sc i\kern-.025em b}\kern-.08em\TeX}


\begin{document}


\pagestyle{fancy}
\fancyhead{}


\maketitle 


\section{Introduction}
\label{sec:intro}
Decentralized partially observable Markov Decision Processes (Dec-POMDPs) describe many real-world problems~\citep{liu2017learning, ragi2014decentralized}, but they are significantly harder to solve than Markov Decision Processes (MDPs). This is because the policy of one agent in Dec-POMDPs effectively serves as the \emph{observation function} of other agents and, hence, agents need to explore over \emph{policies} rather than actions. As a consequence, solving Dec-POMDPs involves searching through the space of tuples of decentralized policies that map individual action-observation histories to actions. This space is double exponential ~\cite{oliehoek2008cross}: \begin{equation}
O\left[\left(\left|A\right|^{\frac{\left(\left|O\right|^{H}-1\right)}{\left|O\right|-1}}\right)^{N}\right],
\end{equation} where $|O|$ is size of the observation space, $|A|$ is size of the individual action space, $H$ is the horizon, and $N$ is the number of agents. Since finding an optimal solution is doubly exponential in the horizon, the problem falls into a class called non-deterministic exponential (NEXP)-Complete~\cite{bernstein2002complexity}.

Solving problems in this class is much harder than solving MDPs, which is just P-complete~\cite{papadimitriou1987complexity}. Indeed, current deep multi-agent RL algorithms for learning approximate solutions in Dec-POMDPs, which mostly extend model-free approaches such as independent learning~\cite{tan1993multi}, suffer from high sample complexity \citep{bard2020hanabi, samvelyan19smac}. This limits their applicability in real world problems and other settings where interactions with the environment are costly. 

To address the problem of high environment sample complexity, MARCO uses a model-based approach. This is motivated by two reasons: 1) We take advantage of centralized training in Dec-POMDPs to learn a model of the environment that generalizes across different policies in just a polynomial number of samples (in the joint-action space and state space) like in single agent RL~\cite{strehl2009reinforcement}. In contrast, as mentioned above, the sample complexity for learning an optimal policy is much larger in Dec-POMDPs (NEXP-complete vs P-complete). And 2) commonly in Dec-POMDPs, there are many possible optimal policies, where each of these only uses a small part of all possible states-action pairs during self-play. For learning an optimal policy, it is sufficient for the model to cover the state-action space associated with any \emph{one} of these equilibria. Therefore in multi-agent settings, it is usually unnecessary to learn a good model of the entire environment.

\begin{figure*}[t!]

    \begin{subfigure}[t]{0.38\textwidth}
    \centering
    \includegraphics[width=\textwidth]{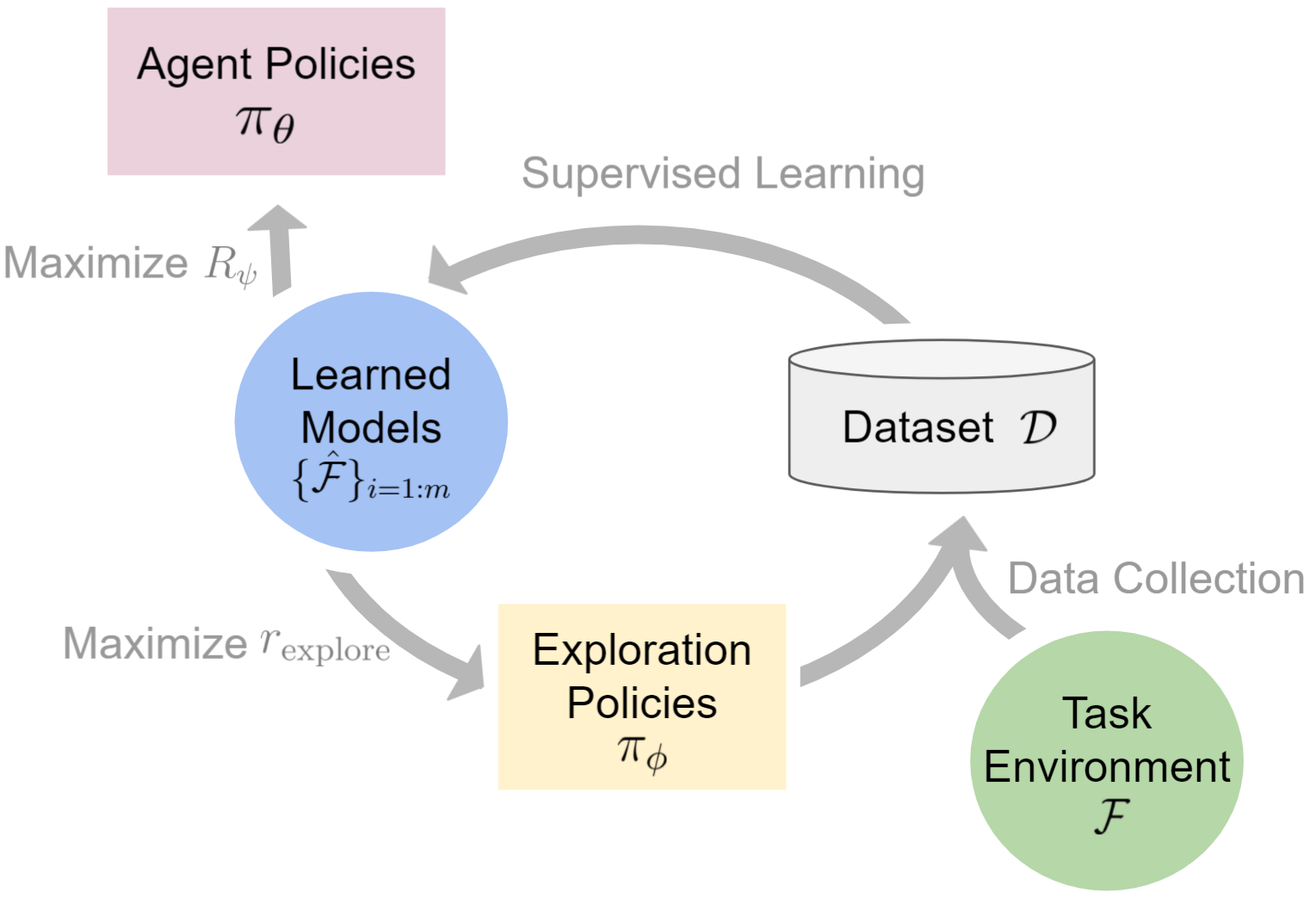}
    \caption{MARCO algorithm.}
    \end{subfigure}
    \begin{subfigure}[t]{0.28\textwidth}
    \centering
    \includegraphics[width=\textwidth]{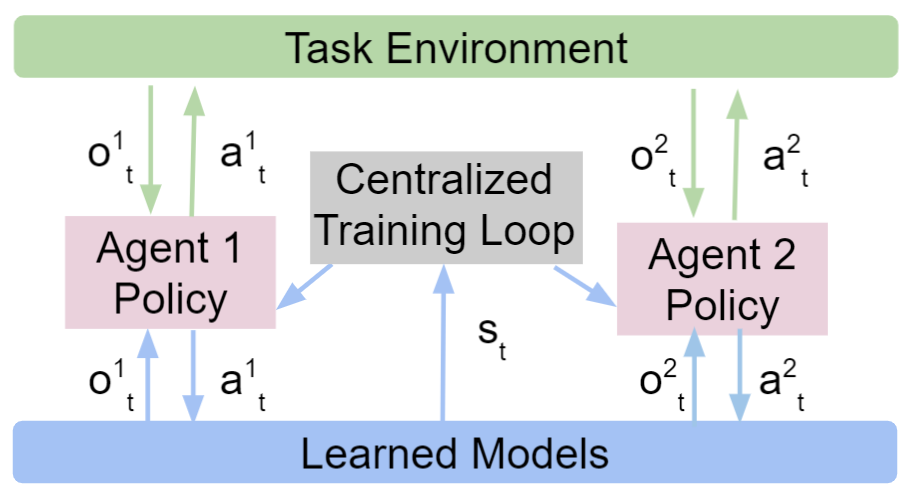}
    \caption{Agent policies (Observation as input).}
    \end{subfigure}
    \begin{subfigure}[t]{0.28\textwidth}
    \centering
    \includegraphics[width=\textwidth]{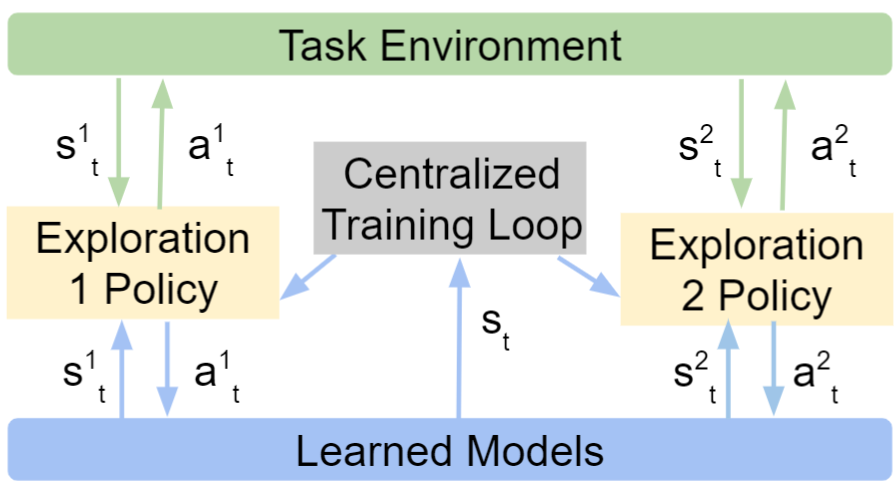}
    \caption{Exploration policies (State as input).}
    \end{subfigure}

    \caption{Overview of the MARCO algorithm. Blue arrows indicate information flow  during training, and green arrows indicate during test time. \textbf{(a)} MARCO alternates between policy optimization and model learning. During model learning, MARCO learns a centralized model that approximates the task environment. During policy optimization, MARCO updates the agents' policies using any model-free MARL algorithm of choice \textit{within the learned model}. MARCO also learns a centralized exploration policy, which is used to collect data in the task environment for model learning. \textbf{(b)} Information flow between the agents' policies and the environment. The agents' policies are decentralized (i.e. they only take individual observations as input). \textbf{(c)} Information flow between the exploration policies and the environment. The exploration policies are centralized (i.e. they take the state as input). Instead of a single exploration policy across the joint-action space, which grows exponentially with the number of agents, MARCO uses one exploration policy per agent.}
    \label{fig:MARCO}
    \end{figure*}

\noindent \textbf{Summary of  contributions}. This work makes four key contributions: 
\begin{enumerate}[wide, labelindent=0pt]
    \item First, we propose MARCO, a model-based learning algorithm for Dec-POMDPs. MARCO leverages centralized training to learn a model of the environment that generalizes across different policies. Within this model, we optimize the agents' policies using standard model-free methods, without using additional samples from the true environment.
    \item In most existing Dyna-style RL algorithms~\cite{sutton1991dyna, sutton1990integrated}, data for model learning is collected using the agent's current policy. This is inefficient when data in the same state-action space is re-collected. To further improve sample complexity, MARCO uses a centralized exploration policy. This policy specifically collects data in parts of the state-action space with high model uncertainty and is trained \emph{inside the model} to avoid consuming additional environment samples.
    \item To analyze the theoretical sample efficiency of model-based RL methods in Dec-POMDPs, we adapt R-MAX \cite{brafman2002r}, a model-based dynamic programming algorithm for MDPs, to tabular Dec-POMDPs. 
    Like MARCO, our adapted R-MAX also learns centralized models and performs exploration through ``optimism in the face of uncertainty'' \citep{brafman2002r}. 
    We prove that this adapted R-MAX algorithm achieves a sample complexity polynomial in the size of the state and the joint-action space.
    \item And finally, we conduct empirical studies comparing MARCO with model-free state-of-the-art multi-agent RL (MARL) algorithms in three cooperative communication tasks, where MARCO improves sample efficiency by up to 20x.
\end{enumerate}

\section{Related Work}

\paragraph{Single Agent Model-Based RL}
In RL problems, we generally do not assume prior knowledge of the environment. Model-free methods learn policies from interacting with the environment. In contrast,  model-based RL (MBRL) methods first learn a model of the environment and use the model in turn for decision making. MBRL is well explored in the context of single-agent RL, and has recently shown promising results ~\cite{ha2018world, hafner2019dream,schrittwieser2020mastering} across a variety of tasks~\cite{todorov2012mujoco, bellemare2013arcade}.

One problem of MBRL is that learning a perfect model is rarely possible, especially in environments with complex dynamics. In these settings, overfitting to model errors often hurt the test time performance. A popular method for addressing this problem is  to learn an ensemble of models ~\cite{kurutach2018model} and selecting one model randomly for each rollout step. 
Furthermore, the variance across the different models is a proxy for model uncertainty, which is used by e.g. \citet{kalweit2017uncertainty, buckman2018sample, yu2020mopo}. During training in areas of high model-uncertainty these methods either penalize the agent or fall back to the real environment. A different approach is to actively explore in state-action space with high model uncertainty to learn better models~\cite{ball2020ready, sekar2020planning}. Our work uses the latter approach of active exploration, and closely aligns with~\cite{sekar2020planning}, where we also explicitly learn an exploration policy within the learned model to perform data collection.

\paragraph{Model-free MARL for Dec-POMDPs}
Most deep RL work on learning in Dec-POMDPs uses model-free approaches. These methods can be roughly divided into two classes, value-based methods, e.g.~\cite{rashid2018qmix, tampuu2017multiagent,sunehag2017value}, which build on DQN~\cite{mnih2015human} and DRQN~\cite{hausknecht2015deep}, and actor-critic methods, such as \cite{lowe2017multi,foerster2018counterfactual}. These methods show good results in many tasks, but the number of samples required often goes into the millions or billions, even for environments with discrete or semantically abstracted state-action spaces~\cite{samvelyan2019starcraft,bard2020hanabi}. Orthogonal to our approach, few works have been proposed to address the sample complexity problem of MARL algorithms using off-policy learning methods ~\cite{vasilev2021semi, jeon2020scalable}.
\vspace{-0.75em}
\paragraph{Model-based work in MARL}
A popular branch of work in the multi-agent setting studies opponent modelling. Instead of learning the dynamics of the environment, agents infer the opponents' policy from observing their behaviour to help decision making \cite{brown1951iterative,foerster2017learning, mealing2015opponent}. Along this line of work, \citeauthor{wang2020model} trains an explicit model for each agent that predicts the goal conditioned motion of all agents in the partially observable environment.

There is little research in MARL that learns a model of the environment dynamics, as we do in our work (i.e. predicting the successor state from a state-action pair). \citeauthor{zhang2020model} theoretically analyse the sample complexity of model-based two-player zero-sum discounted Markov games, but do no present empirical studies. \citeauthor{krupnik2020multi} propose a multi-step generative model for two-player games, which does not predict the successor state, but the sequence of future joint-observation and joint-actions of all agents. Concurrent to our work, MAMBPO~\cite{willemsen2021mambpo}, most closely aligns with ours. Here, the authors learn a model of the environment, within which they perform policy optimization. While this work is concurrent to ours, there are also two key differences:  1) it does not learn a centralized exploration policy, and 2) MAMBPO uses the joint-observation and joint-action at the \textit{current} timestep to predict the next joint-observation and reward. In contrast to our \textit{fully centralized} model, which conditions on the central state, their model is not Markovian (see Figure \ref{fig:pgm}). We illustrate this by a simple example: Suppose other agents in the environment can flip a light switch, but lights actually only turn on after a delay of 10 timesteps, which is reflected by a count-down value in the central state (not observed by any of the agents). The joint-observation and joint-action alone at the current timestep is insufficient for predicting the next joint-observation. In this example, the history of at least 10 past joint-observation and joint-action is required. 

\section{Background}
\subsection{Dec-POMDPs}
\label{sec:setting}
We consider a fully cooperative, partially observable task that is formalized as a decentralized partially observable Markov Decision Process (Dec-POMDP)~\cite{oliehoek2016concise} $\mathcal{F} = \langle S, A, P, R, Z, O, N, \gamma, d_0 \rangle$.
$s \in S$ describes the central state of the environment, and $d_0$ is the initial state distribution. At each timestep, each agent $j \in J \equiv\{1, \ldots, N\}$ draws individual observations $o^j \in Z$ according to the observation function $O(s, \mathbf{a}): S \times \mathbf{A} \rightarrow \mathbf{Z}$. Each agent then chooses an action $a^{j} \in A,$ forming a joint-action $\mathbf{a} \in \mathbf{A} \equiv A^{N}$ \footnote{Bold notation indicates joint quantity over all agents.}. This causes a transition in the environment according to the state transition function $P(s' \mid s, \mathbf{a}): S \times \mathbf{A} \times S \rightarrow [0,1]$. All agents share the same reward function $R(s, \mathbf{a}): S \times \mathbf{A} \rightarrow \mathbb{R}$ and $\gamma \in[0,1)$ is a discount factor.  

 Each agent has an action-observation history (AOH) $\tau^{j} \in T \equiv(Z \times A)^*,$ on which it conditions a stochastic policy $\pi^{j}\left(a^{j} \mid \tau^{j}\right): T \times$ $A \rightarrow[0,1] .$ The joint-policy $\mathbf{\pi}$ induces a joint action-value function: $Q^{\pi}\left(s_{t}, \mathbf{a}_{t}\right)=\mathbb{E}_{s_{t+1: \infty}, \mathbf{a}_{t+1: \infty}}\left[R_{t} \mid s_{t}, \mathbf{a}_{t}\right],$ where $R_{t}=\sum_{i=0}^{\infty} \gamma^{i} r_{t+i}$ is the discounted return. 
 

\subsection{Dyna Style Model-Based RL}
RL algorithms fall under two classes: model-free methods, where we directly learn value functions and/or policies by interacting with the environment, and model-based methods, where we use interactions with the environment to learn a model of it, which is then used for decision making. Dyna-style algorithms~\cite{sutton1991dyna, sutton1990integrated} are a family of model-based algorithms for single-agent RL where training alternates between two steps: model learning and policy optimization. During model learning data is collected from the environment using the current policy and is used to learn the transition function. 
During policy optimization the policy is improved using a model-free RL algorithm of choice from data generated by the learned model. 

\subsection{Model-Free Multi-Agent Approaches}
Most MARL methods for approximately solving Dec-POMDPs fall in the category of model-free methods. Many use the centralized training for decentralized execution (CTDE) framework \cite{foerster2018counterfactual,kraemer2016multi, oliehoek2008optimal}, i.e., the learning algorithm has access to all global information, such as the joint-actions and the central state, but, at test time, each agent's learned policy conditions only on its own AOH $\tau^{j}$.

A popular branch of multi-agent methods for partially observable, fully cooperative settings is based on Independent Q-Learning (IQL)~\cite{tan1993multi,tampuu2017multiagent}. IQL treats the Dec-POMDP problem as simultaneous single-agent problems. Each agent learns its own Q-value that conditions only on the agent's own observation and action history, treating other agents as a part of the environment. MAPPO~\citep{chao2021surprising}, another independent learning algorithm, extends PPO~\citep{schulman2017proximal} to Dec-POMDPs. 
The advantage of independent learning is that it factorizes the exponentially large joint-action space. However, due to the nonstationarity in the environment induced by the learning of other agents, convergence is no longer guaranteed. Works like VDN and QMIX~\cite{sunehag2017value,rashid2018qmix} partially address this issue by learning joint Q-values. The former uses the sum of value functions of individual agents as the joint Q-values, while the latter learns a function parameterized by a neural network to map from individual Q-values to joint Q-values using the central state.

\begin{figure}[t!]
    \centering
    \includegraphics[width=0.77\columnwidth]{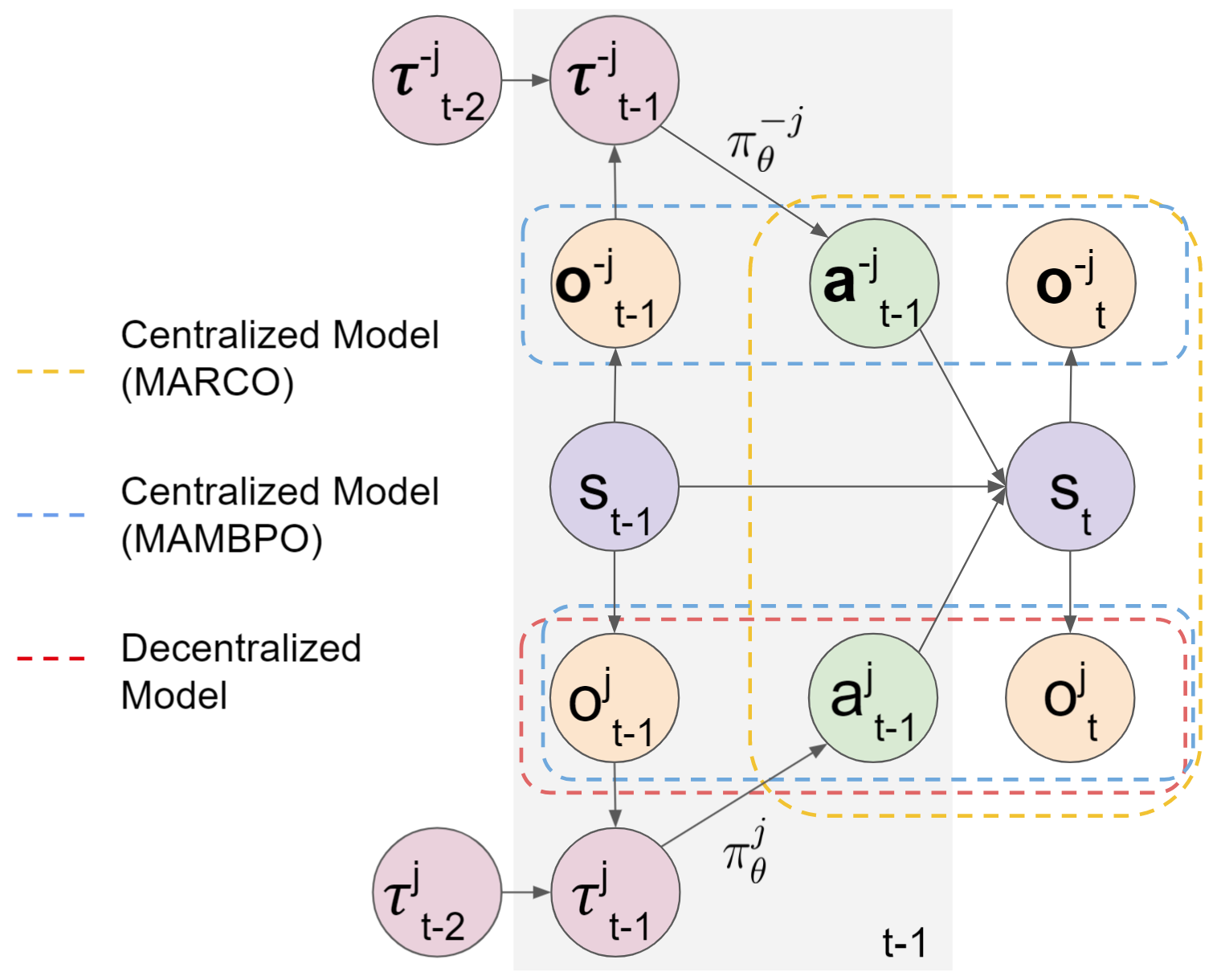}
    \caption{Probabilistic graphical models of the different possible learned Dec-POMDPs observation models. We use the notation $-j$ to denote the set of agents $\{1,..,n\} \setminus {j}$. The \textbf{yellow} box shows the centralized observation model $O_\psi (\mathbf{o_{t}}|s_{t}, \mathbf{a_{t-1}})$ learned by MARCO. The MARCO model is Markovian because the central state is used. It is also stationary since the environment dynamics are assumed to be stationary. The \textbf{blue} box shows the partially centralized model $O(\mathbf{o_{t}}|\mathbf{o_{t-1}},\mathbf{a_{t-1}})$ learned by MAMBPO \cite{willemsen2021mambpo}. The model is non-Markovian because using the current timestep's joint-observation instead of the central state is insufficient to make the prediction. The \textbf{red} box shows the decentralized model $O^j(o^j_{t}|o^j_{t-1}, a^j_{t-1})$, which is also non-Markovian because the central state is not used. It is also non-stationary because the model depends on the policies of other agents $\mathbf{\pi^{-j}_\theta}$, which are updated throughout training.}
    \label{fig:pgm}
\end{figure}

\section{Methods}
Optimally solving, or even finding an $\epsilon$-approximate solution for, Dec-POMDPs is NEXP-complete~\cite{bernstein2002complexity,rabinovich2003complexity}, which is significantly harder than solving MDPs with a complexity of P-complete~\cite{papadimitriou1987complexity}.  
This provides a strong motivation for using a model-based approach in Dec-POMDPs, as the number of samples required for learning a centralized model is polynomial in the state-action space, like in single-agent-RL~\cite{strehl2009reinforcement}. 

To learn a policy in Dec-POMDPs, MARCO (see Figure \ref{fig:MARCO}) adapts Dyna-style model-based RL to the multi-agent setting, as shown in Algorithm \ref{alg:MARCO}\footnote{For ease of notation, the input to all components $\hat{f} \in \hat{\mathcal{F}}$ in Algorithm 1 is written as $s_t, \mathbf{a_t}$. Actual inputs of $\hat{f}$ vary based on which model component $\hat{f}$ is.}, which alternates between learning an approximation of the Dec-POMDP, $\hat{\mathcal{F}}$, and optimizing the policy within $\hat{\mathcal{F}}$. We refer to  $\hat{\mathcal{F}}$ as \textit{the model} for the remainder of the paper. The two key contributions of MARCO are 1) learning a stationary model via centralized training, and 2) actively collecting data using a separate centralized \emph{exploration policy} trained inside the model not requiring additional environment samples.

\subsection{Model-Based MARL with Centralized Models} 
 The model $\mathcal{\hat{F}}$ is composed of the following components, each of which is a parameterized, learned approximation of the original Dec-POMDP $\mathcal{F}$:
 \vspace{-5pt}
\begin{align*}
  \text{Reward model}: \qquad & R_\psi(r_t|s_t, \mathbf{a_{t-1}}, s_{t-1}) \\
  \text{Dynamics model}: \qquad & T_\psi(s_{t+1}|s_t, \mathbf{a_t}) \\
  \text{Observation model}: \qquad & O_\psi(\mathbf{o_t}|s_t, \mathbf{a_{t-1}}) \\ 
  \text{Termination model}: \qquad & {P_\mathrm{term}}_\psi(\mathrm{termination}|s_t, \mathbf{a_t})
\end{align*}
In the single-agent, fully observable MBRL setting, the agent learns a dynamics model, and sometimes a reward and termination model. In MARCO, we also learn an observation model. 

MARCO takes advantage of CTDE by letting all of our models condition on the central state as well as the joint-action. The importance of centralized model learning can be illustrated as follows: For example (see Figure \ref{fig:pgm}), if the observation model is learned in a decentralized fashion (i.e. $O(o_t^j|o_{t-1}^j, a_{t-1}^j)$), then the actions of other agents, even when unobserved, can change the transition function of agent $j$ (i.e. if another agent turns on a light, agent $j$ will observe that the light transitioned from ``off'' to ``on''). As agents' policies are changing throughout training, the observation model thus is non-stationary and would have to be re-learned at various stages during training. In contrast, MARCO learns a fully centralized model which approximates the stationary \emph{ground truth} Dec-POMDP. Crucially, while in Dec-POMDPs agents have to explore over policies, MARCO allows agents to learn a single stationary model that is simultaneously accurate for \emph{all different} policies being explored.

Each component of the model is parametrized by a separate neural network and is trained using supervised learning through maximizing the likelihood of the collected data. Similar to~\citeauthor{kurutach2018model}, we train an ensemble of models to prevent the policy from overfitting to and exploiting model errors. When generating rollout data using the model, we randomly sample which one of the ensemble models to generate from. 

\begin{algorithm}[t!]
\caption{MARCO: Multi-Agent RL with Centralized Models and Exploration}
\label{alg:MARCO}
\begin{algorithmic}[1]
  \State \textbf{Input}: Number of ensemble models $m$, and uncertainty hyper-parameter $\lambda$.
  \State Initialize action-value functions $Q_\theta$, $Q_\phi$.
  \State Initialize $m$ ensemble environment models $\hat{\mathcal{F}}_i = \{R_{i,\psi},T_{i,\psi}, O_{i,\psi},  
  P_\mathrm{term,i_\psi}\}$ for $i=1,.., m$.
  \State Initialize a dataset $D$ with samples collected using a random policy from the real environment $\mathcal{F}$.
  \Repeat
  \State Train model $\hat{\mathcal{F}}$ using dataset $D$.
\State Update $Q_{\theta}$ using model-free algorithm of choice within a randomly chosen model $\hat{\mathcal{F}}_i$.
\State Update $Q_\phi$ using model-free algorithm of choice in a centralized fashion within a randomly chosen model $\hat{\mathcal{F}}_i$ as follows:
\begin{equation}
\label{eq:central_exploration}
\begin{aligned}
    \tilde{r}(s_t, \mathbf{a_t}) &= \sum_{\hat{f}\in \hat{\mathcal{F}}} \sum_{k=1}^{d(\hat{f})} Var(\{\hat{f}_{k,i, \psi}(s_t, \mathbf{a_t})\}_{i=1,..,m})\\
    r_{\mathrm{explore}} &= R_\psi(s_t, \mathbf{a_t}, s_{t-1}) + \lambda \tilde{r}(s_{t}, \mathbf{a}_{t}) \\
    y &= r_{\mathrm{explore}} +\gamma \max_{\mathbf{a}^{\prime}}\sum_{j=1}^{n} Q_{\text{target}}\left( s_{t+1}, a^{\prime} ,j; \phi^{-}\right)\\
  \mathcal{L}(\phi) &= \sum_{i=1}^{b}\left[\left(y_{i}-\sum_{j=1}^{n}Q(s, \mathbf{a}, j; \phi)\right)^{2}\right],
\end{aligned}
\end{equation}  
where $d(\hat{f})$ is the dimensionality of the model component $\hat{f}$.

\State Collect samples from environment $\mathcal{F}$ using centralized exploration policy $\pi_\phi$ and add them to $D$.
\Until{Maximum environment samples is used.}

\end{algorithmic}
\end{algorithm}

\subsection{Model-Based MARL with Centralized Exploration Policy}
MARCO collects data for model learning from a separate exploration policy $\pi_\phi$. Ideally, we want to collect data in regions of the state-action space with high model uncertainty. To quantify this \textit{epistemic} uncertainty, we use the variance of the models in the ensemble, which we denote as $\tilde{r}$ in equation \ref{eq:central_exploration}. To prevent the exploration policy from wandering off to regions irrelevant to the search space of the policies, the exploration policy should also optimize for the original objective. Hence, we set the reward of the exploration policy as the linear combination of $\tilde{r}$ and the reward generated by the reward model $R_\psi$. The hyper-parameter $\lambda$ controls the trade-off between exploration and exploitation. The exploration policy is learned entirely in the model, without using additional samples from the ground truth environment. To train the exploration policy, we use VDN (or QMIX), and again fully exploit CTDE by using the central state inside the model.
By conditioning on the central state, the exploration policy is able to more quickly return to the frontier, where high model uncertainty starts to occur. This avoids repeating data collection in regions of the already known state-action space, allowing more sample efficient model learning.  

Note that although MARCO's exploration policies uses centralized information, each agent's exploration policy outputs only their respective action. This is opposed to learning a single exploration policy that outputs the joint-action, which becomes intractable due to the exponential joint-action space.

\section{Sample Complexity in Tabular Dec-POMDPs}
We investigate the theoretical sample complexity of model-based Dec-POMDP methods. To do so, we make four additional assumptions that are not required for MARCO a) discrete and finite state, observation and action space b) at each timestep, the reward is bounded $0 \leq r(s, \mathbf{a}) \leq 1$, c) finite horizon, and d) deterministic observation function. Under these assumptions, we show that an idealized model-based method achieves a sample complexity polynomial in the size of the state and joint-action space. 

 We modify R-MAX \citep{brafman2002r}, an MBRL algorithm for MDPs, to the Dec-POMDP setting (see Algorithm \ref{alg:rmax}). We refer to our modified algorithm as the Adapted R-MAX for the remaining of the paper. Like MARCO, the Adapted R-MAX aims to learn a near-optimal decentralized joint-policy for a given Dec-POMDP $D$. Our adaptation of R-MAX also takes full advantage of CTDE by learning centralized models $\hat{P}(s,\mathbf{a})$, $\hat{R}(s,\mathbf{a})$, and $\hat{O}(s)$ using empirical estimates. Using the centralized models, the Adapted R-MAX constructs an approximate \textit{K-known Dec-POMDP} $\hat{D}_K$ (see Definition \ref{def:known}), where $K$ is the set of state-action pairs that has been visited at least $m$ times. Within the model $\hat{D}_K$, we then evaluate all possible joint-policies $\mathbf{\pi} \in \mathbf{\Pi}$ and choose the best one. Both MARCO and our adaptation of R-MAX encourage exploration in parts of the state-action space with high model uncertainty. The former performs exploration through a separate centralized exploration policy, while the latter performs exploration through optimistically setting the reward function of under-visited state-action pairs (i.e. those not in $K$).

\begin{definition}[K-Known Dec-POMDP]
\label{def:known}
    $D_K$ is the expected version of $\hat{D}_K$ where:
    
$$
\begin{aligned}
    P_{K}\left(s^{\prime} \mid s, \mathbf{a}\right) & =\left\{\begin{array}{lll} P\left(s^{\prime} \mid s, \mathbf{a}\right) & \text { if } (s, \mathbf{a}) \in K \\ \mathbbm{1}\left[s^{\prime}=s\right] & \text { otherwise} & \end{array}\right.\\
    \hat{P}_{K}\left(s^{\prime} \mid s, \mathbf{a}\right) &= \begin{cases} \frac{n\left(s, \mathbf{a}, s^{\prime}\right)} {n(s, \mathbf{a})}, & \text { if }(s, \mathbf{a}) \in K \\ \mathbbm{1}\left[s^{\prime}=s\right], & \text { otherwise }\end{cases} \\
     R_{K}\left(s, \mathbf{a}\right)&=\left\{\begin{array}{lll} R\left(s, \mathbf{a}\right) & \text { if } (s, \mathbf{a}) \in K \\ R_\text{max} & \text { otherwise} & \end{array}\right. \\
     \hat{R}_{K}(s, \mathbf{a})&= \begin{cases}\frac{\sum_{i}^{n(s,\mathbf{a})}r(s, \mathbf{a})}{n(s,\mathbf{a})}, & \text { if }(s, \mathbf{a}) \in K \\ R_{\max }, & \text { otherwise }\end{cases} \\
    O_{K}\left(s\right)&=\left\{\begin{array}{lll} O\left(s\right) & \text { if } (s, \cdot) \in K \\ \text{random observation} & \text { otherwise} & \end{array}\right. \\
     \hat{O}_{K}\left(s\right) &=\left\{\begin{array}{lll} \mathbf{o} \text{ where } n\left(s,  \mathbf{o}\right) > 0 & \text { if } (s, \cdot) \in K \\ \text{random observation} & \text { otherwise} & \end{array}\right.
\end{aligned}
$$

\end{definition}

\begin{algorithm}[h!]
\caption{Adapted R-MAX for Dec-POMDPs}
\label{alg:rmax}
\begin{algorithmic}[1]
    \State \textbf{Input}: $\gamma, m$.
    
    \For {all $(s, \mathbf{o})$}
        \State $n(s, \mathbf{o}) \leftarrow 0$
    \EndFor

    \For {all $(s, \mathbf{a})$ }
        \State $r(s, \mathbf{a}) \leftarrow 0$
        \State $n(s, \mathbf{a}) \leftarrow 0$
        \For {all $s^{\prime} \in S$}
            \State $n(s, \mathbf{a}, s^{\prime}) \leftarrow 0$
        \EndFor
    \EndFor
    \For {$t=1,2,3,\cdots$}
        \State Let $s, \mathbf{o}$ denote the state and observation at time $t$ respectively.
        \State Choose action $\mathbf{a}$ according to $\mathbf{\pi}^*_{\hat{D}_K}$
        \State Let $r$ be the immediate reward and $s^{\prime}$ the next state after executing action $\mathbf{a}$ from state $s$.

        \If{$n(s, \mathbf{a})<m$}
            \State $n(s,\mathbf{o}) \leftarrow 1$ // Record observation
            \State $n(s, \mathbf{a}) \leftarrow n(s, \mathbf{a})+1$
            \State $r(s, \mathbf{a}) \leftarrow r(s, \mathbf{a})+r / /$ Record immediate reward
            \State $n(s, \mathbf{a}, s^{\prime}) \leftarrow n(s, \mathbf{a}, s^{\prime})+1 / /$ Record immediate next-state
        
        \EndIf
        \If {$n(s, \mathbf{a})=m$}
                \For {all $\mathbf{\pi} \in \mathbf{\Pi}$}:
                    \State Obtain $J_{\hat{D}_K}(\mathbf{\pi})$ using Monte Carlo rollouts in $\hat{D}_K$.
                \EndFor
                \State $\mathbf{\pi}^*_{\hat{D}_K} \leftarrow \arg \max_{\mathbf{\pi}} J_{\hat{D}_K}(\mathbf{\pi})$
        \EndIf
    \EndFor
\end{algorithmic}
\end{algorithm}

To study the sample complexity of the Adapted R-MAX, we define the value of a joint-policy as follows:
\begin{definition}
    Given a decentralized joint-policy $\mathbf{\pi}$, we estimate its value $J(\mathbf{\pi}) $ in a Dec-POMDP $D$, defined as the expected reward obtained by following the joint-policy in $D$,
\begin{equation}
         J_D(\mathbf{\pi}) = \E_{s\sim d_0} \left[V^\pi(s)\right].
\end{equation}
\end{definition}
\begin{theorem}
\label{thm:pac}
Suppose that $0 \leq \varepsilon<1$ and $0 \leq \delta<1$ are two real numbers and $D$ is any Dec-POMDP. There exists inputs $m=m\left(\frac{1}{\varepsilon}, \frac{1}{\delta}\right)$ and $\varepsilon$, satisfying $m\left(\frac{1}{\varepsilon}, \frac{1}{\delta}\right)=O\left(\frac{(S+\ln (S \mathbf{A} / \delta)) V_{\max }^{2}}{\varepsilon^{2}(1-\gamma)^{2}}\right)$ such that if the Adapted R-MAX algorithm is executed on $D$ with inputs $m$ and $\varepsilon$, then the following holds: Let $\mathbf{\pi}^*_{\hat{D}_K}$ denote the Adapted R-MAX 's policy. With probability at least $1-\delta$, $J_D(\mathbf{\pi}^*) - J_D(\mathbf{\pi}^*_{\hat{D}_K}) \leq 2\epsilon$ is true for all but
$$O\left(\frac{|S||\mathbf{A}|}{(1-\gamma)^2\epsilon^3} \left(|S| + \ln(\frac{|S||\mathbf{A}|}{\delta})\right)V_{\max}^3 \ln \frac{1}{\delta}\right)$$ episodes.
\end{theorem}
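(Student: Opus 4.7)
My plan follows the PAC-MDP template of Brafman--Tennenholtz and Strehl--Li--Littman, specialized to Dec-POMDPs via assumption~(d). Four ingredients are needed: (i) concentration of the empirical model on known pairs, (ii) a simulation lemma for decentralized joint-policies in the padded Dec-POMDP, (iii) optimism from the $R_{\max}$ padding of $\hat{D}_K$, and (iv) an implicit explore--exploit counting argument that turns "small escape probability" into a bound on the number of non-$2\epsilon$-optimal episodes.

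First I would fix $m$ large enough that every known pair has empirical transition within $L_1$ distance $\epsilon(1-\gamma)/V_{\max}$ of the truth and empirical reward within $\epsilon(1-\gamma)/2$. A standard $L_1$ Chernoff bound gives $m = O\!\left((S + \ln(S|\mathbf{A}|/\delta))V_{\max}^2 / (\epsilon^2(1-\gamma)^2)\right)$ samples per pair, matching the theorem, and a union bound over the $|S||\mathbf{A}|$ pairs absorbs $\delta$. Because observations are deterministic, a single visit suffices to learn $O(s)$ exactly, so observations do not inflate $m$. Call this high-probability event $E_{\mathrm{acc}}$. On $E_{\mathrm{acc}}$ I would then prove the simulation lemma $|V^\pi_{\hat{D}_K}(s) - V^\pi_{D_K}(s)| \leq \epsilon$ for every decentralized $\mathbf{\pi}\in\mathbf{\Pi}$ and every $s$; this is a direct port of the single-agent argument, with the joint-action space $\mathbf{A}$ playing the role of $A$ and the standard telescoping giving a bound of the form $H(\epsilon_R + \gamma V_{\max}\epsilon_T)\leq \epsilon$ at effective horizon $H \sim 1/(1-\gamma)$.

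Optimism follows because $D_K$ pads unknown pairs with $R_{\max}$ and makes them self-absorbing, hence $V^\pi_{D_K}(s) \geq V^\pi_D(s)$ for every $\pi$; combined with $\mathbf{\pi}^*_{\hat{D}_K} = \arg\max_{\mathbf{\pi}\in\mathbf{\Pi}} J_{\hat{D}_K}(\mathbf{\pi})$, we get $J_{\hat{D}_K}(\mathbf{\pi}^*_{\hat{D}_K}) \geq J_{\hat{D}_K}(\mathbf{\pi}^*) \geq J_D(\mathbf{\pi}^*) - \epsilon$, where $\mathbf{\pi}^*$ denotes the optimal decentralized policy in $D$. The final step is the explore--exploit dichotomy: let $p_{\mathrm{esc}}$ be the probability that an episode under $\mathbf{\pi}^*_{\hat{D}_K}$ touches some $(s,\mathbf{a})\notin K$. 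If $p_{\mathrm{esc}} \lesssim \epsilon/V_{\max}$, the rollouts in $D$ and in $\hat{D}_K$ agree except on a low-probability event and $J_D(\mathbf{\pi}^*_{\hat{D}_K}) \geq J_{\hat{D}_K}(\mathbf{\pi}^*_{\hat{D}_K}) - \epsilon \geq J_D(\mathbf{\pi}^*) - 2\epsilon$, which is the desired guarantee (absorbing the additional constant into the rescaling of $\epsilon$). Otherwise the episode is an "exploration episode" contributing a fresh sample to an unknown pair in expectation; since at most $|S||\mathbf{A}|m$ fresh samples can ever be collected, a Hoeffding bound on the count of exploration episodes yields $O\!\left(|S||\mathbf{A}| m \cdot (V_{\max}/\epsilon) \cdot \ln(1/\delta)\right)$ bad episodes, which after substituting $m$ matches the stated bound $O\!\left(|S||\mathbf{A}|(|S|+\ln(|S||\mathbf{A}|/\delta))V_{\max}^3 / (\epsilon^3 (1-\gamma)^2) \cdot \ln(1/\delta)\right)$.

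The hard part, I expect, is the interaction between partial observability and the restricted decentralized policy class $\mathbf{\Pi}$. The classical simulation lemma is stated for policies over the centralized state, whereas each $\mathbf{\pi}\in\mathbf{\Pi}$ maps individual AOHs to actions, so I must first lift $\mathbf{\pi}$ to a well-defined history-dependent joint-policy on the underlying state process. It is precisely assumption~(d), deterministic $O$, that makes each $\tau^j$ a function of the state trajectory and therefore makes $J_{D_K}(\mathbf{\pi})$ and $J_{\hat{D}_K}(\mathbf{\pi})$ well-defined even at states the agents have never visited — in particular ensuring optimism transfers to the best \emph{decentralized} policy, not just to an unrestricted state-feedback policy. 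Once this lifting is in place, the remainder is essentially Strehl--Li--Littman with $\mathbf{A}$ in place of $A$, and the polynomial dependence on $|S|, |\mathbf{A}|$ (rather than on the doubly-exponential policy space) comes directly from the centralized model class, as anticipated in the introduction.
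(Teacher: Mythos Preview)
Your proposal is correct and follows essentially the same route as the paper: concentration for $\hat P,\hat R$ on known pairs via Hoeffding/$L_1$ bounds with a union bound over $|S||\mathbf{A}|$ pairs, a simulation lemma between $D_K$ and $\hat D_K$, optimism of $D_K$ over $D$, an induced (escape) inequality comparing $J_D$ and $J_{D_K}$, and the coin-flip counting argument bounding the number of episodes with large escape probability. The ``hard part'' you flag is handled in the paper exactly as you anticipate---by lifting each decentralized $\mathbf{\pi}$ to a deterministic map on joint state--action histories $\hat s$ (absorbing the deterministic observation function into the policy), which is what makes the single-agent simulation/induced-inequality machinery go through verbatim with $\mathbf{A}$ in place of $A$.
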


Theorem \ref{thm:pac} shows that Adapted R-MAX acts near-optimally on all but a polynomial number of steps. These results confirm the motivation of MARCO, i.e. that an idealized MBRL method for Dec-POMDPs can indeed have polynomial sample complexity.
The proof (see the appendix) is heavily based on results from \citeauthor{jiang2020notes} and \citeauthor{strehl2009reinforcement}, but for the first time extends them to the Dec-POMDP setting.

\section{Experiments}
\label{experiment_section}

\subsection{Environments}

\begin{figure}[t!]
    \centering
    \begin{subfigure}[t]{0.42\columnwidth}
        \centering
        \includegraphics[width=\columnwidth]{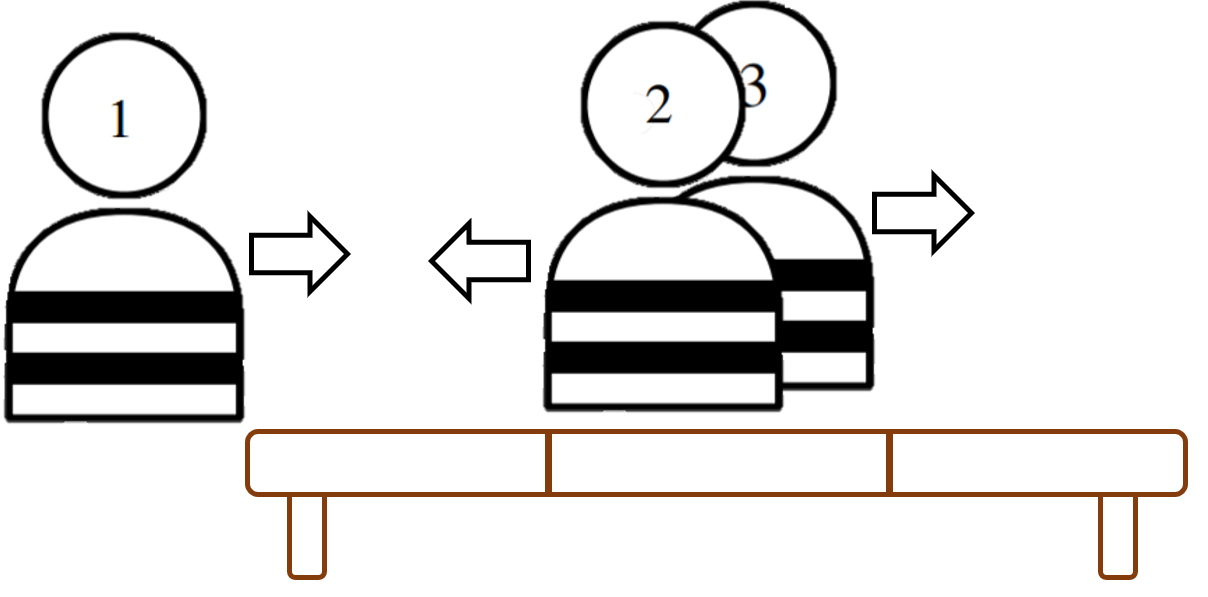}
        \caption{Bridge crossing phase.}
        \label{switch:a}
    \end{subfigure}
    \begin{subfigure}[t]{0.94\columnwidth}
        \centering
        \includegraphics[width=\columnwidth]{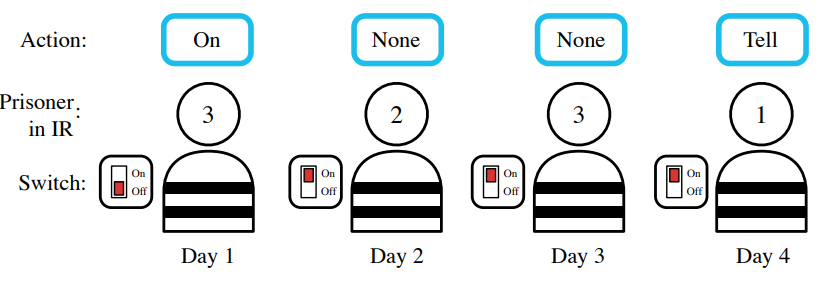}
        \caption{ Switch riddle playing phase.}
        \label{switch:b}
    \end{subfigure}
    \caption{\textit{Switch riddle with bridge}. \textbf{(a)} The game begins with the \textit{bridge crossing} phase. All agents start on the left side of the bridge. At each timestep, each agent chooses an action from $\{$``Left’’,  ``Right’’, and ``End episode’’$\}$. \textbf{(b)} Switch riddle \textit{playing phase}~\cite{foerster2016learning}. After all agents arrive at the right side of the bridge, every timestep one agent gets sent to the interrogation room where they see the switch and choose an action from $\{$``On’’, ``Off’’, ``Tell’’, ``None’’, ``Left’’, ``Right’’, and ``End episode’’$\}$.}
    \label{fig:envs}
    \vspace{-10pt}
\end{figure}

We evaluate the sample efficiency of MARCO against model-free MARL algorithms on three fully cooperative, partially observable communication tasks, the switch riddle~\cite{foerster2016learning}, a variant of the switch riddle, and the simple reference game from the multi-agent particle environment (MPE)~\cite{lowe2017multi}. We explicitly chose communication tasks because one agent's belief is directly affected by other agents' policies, resulting in the larger policy search spaces typical for Dec-POMDPs.

\paragraph{Switch without Bridge~\cite{foerster2016learning}}
 At each timestep $t$, a random agent $j \in \{1,2,3\}$ is sent into the interrogation room for one timestep. Each agent observes whether it is currently in the room, but only the current agent in the room observes whether the light switch in the room is ``On'' or ``Off''. If agent $j$ is in the interrogation room, then its actions are $a_{t}^{j} \in\{$ ``None’’, ``Tell’’, ``Turn on lights’’, ``Turn off lights’’$\}$; otherwise the only action is ``None’’. The episode ends when an agent chooses ``Tell’’ or when the maximum timestep, $T$, is reached. The reward $r_{t}$ is 0 unless an agent chooses ``Tell’’, in which case it is 1 if all agents have been to the interrogation room, and $-1$ otherwise. Finally, to keep the experiments computationally tractable we set the time horizon to $T=6$.

 \paragraph{Switch with Bridge} To make the first task more challenging, we modify it as follows (see Figure \ref{fig:envs}). All agents start on the left side of a bridge, and the switch riddle only starts once all agents have crossed a bridge of length $3$. If at timestep $t$ not all agents have crossed the bridge, each agent observes its position $o_t^{j} \in \{0,1,2,3\}$ on the bridge, and its actions are $a_{t}^{j} \in\{$  ``Left’’,  ``Right’’,  ``End episode’’$\}$. Selecting the action ``Left’’ and ``Right’’ increments the agent's position by -1 and +1 respectively. The episode ends and agents receive a reward of 0 if ``End episode’’ is chosen. When all agents have crossed the bridge (i.e. all agents are at position $3$ on the bridge), the switch riddle starts. Now agents proceeds like the above task, except that the agent currently in the room has access to additional actions $a_{t}^{j} \in\{$  ``None’’,  ``Tell’’,  ``Turn on lights’’,  ``Turn off lights’’,  ``Left’’,  ``Right’’,  ``End episode ’’$\}$. Selecting ``Left’’ or ``Right’’ is equivalent to selecting ``None’’, and selecting ``End episode’’  terminates the episode early with a return of $0$. We set the time horizon to $H=9$.
 
\paragraph{The Simple Reference Game} This task is a part of the multi-agent particle environment (MPE)~\cite{lowe2017multi}, and consists of two agents, that are placed in an environment with three landmarks of differing colors. At the beginning of every episode, each agent is assigned to a landmark of a particular color. The closer the agents are to their assigned landmark, the higher their reward. However, agents themselves don't observe their own assigned color, only the other agent's assigned color. At each timestep, each agent chooses two actions: A movement action $\in$ $\{$ ``Left’’,  ``Right’’, ``Up’’,  ``Down’’,  ``Do nothing’’$\}$, and one of the 10 possible messages that is sent to the other agent.

\subsection{Experiment Details \protect \footnote{See the appendix \cite{zhang2021centralized} for detailed experiment descriptions}}
\paragraph{Model Learning}
The dynamics model in the two switch tasks is an auto-regressive model implemented using GRUs \cite{chung2014empirical}. The remaining components of the model $\hat{\mathcal{F}}$ are implemented using fully connected neural networks. All model components are trained in supervised manner via maximum likelihood.

\paragraph{Dataset Collection}
The initial dataset is gathered with a random policy for all MARCO experiments. In the \textit{switch without bridge} task, 5k samples are collected from the environment after every 10k training steps in the model. No further data is collected beyond 10k samples.
For the \textit{switch with bridge} and the MPE tasks, 10k samples are collected from the environment after every 50k training steps in the model. No further data is collected beyond 50k samples. 

 \paragraph{Policy Optimization}
 The model-free baseline for each task is chosen by finding the most sample efficient algorithm between IQL, VDN, and QMIX. For each task, MARCO uses the same algorithm for policy optimization inside the model as the corresponding model-free baseline.

\begin{figure*}[t!]
    \centering
    \includegraphics[width=0.85\textwidth]{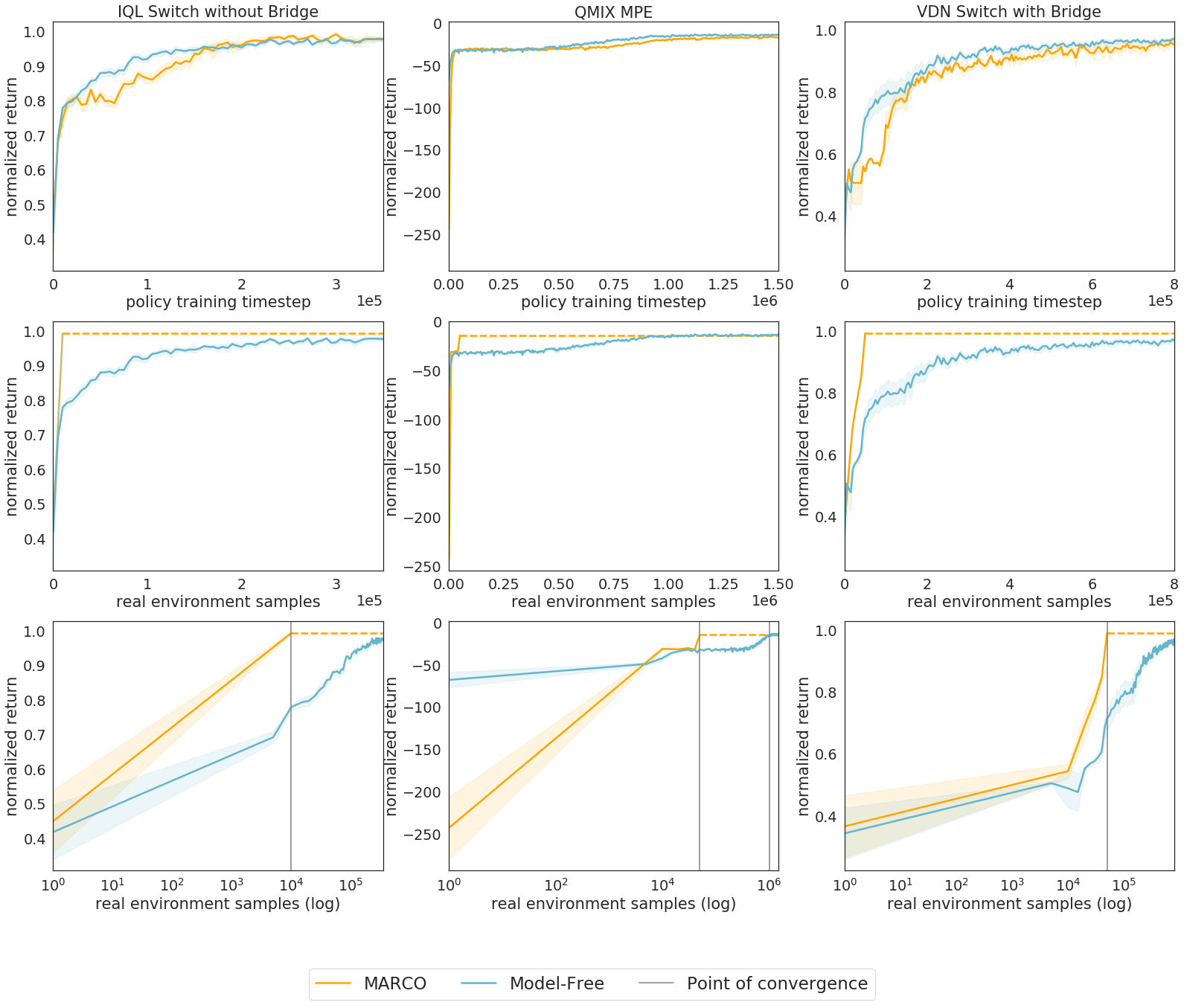}
    \vspace{-10pt}
    \caption{MARCO's test performance over 50 episodes with $\lambda=2.0$ matches model-free performance with much less environment samples.  The error bars reported are the standard error over 8 runs.
    \textbf{Left}: IQL in the \textit{switch without bridge} task. MARCO is 20x more sample efficient.
    \textbf{Middle}: QMIX in MPE. MARCO is 20x more sample efficient. 
    \textbf{Right}: VDN in the \textit{switch with bridge} task. The MARCO model is learned with 50k samples.  MARCO is 12x more sample efficient. }
    \label{fig:switch_result}
\end{figure*}

\subsection{Results}
\vspace{-0.4em}
The top row in Figure~\ref{fig:switch_result} displays results against \textit{policy training steps} to show MARCO matches model-free performance, while the middle row displays results against \textit{number of real environment interactions} to show sample efficiency of MARCO. The bottom row illustrates the performance against \textit{number of real environment interactions in log scale}, where we see a 1-2 order of magnitude improvement of sample-efficiency in MARCO over model-free methods.

The left column in Figure \ref{fig:switch_result} shows results for the \textit{switch with bridge} task. Model-free IQL learns the optimal policy in roughly 200k samples. MARCO learns the optimal policy with 10k samples, which is a sample efficiency increase of 20x. 

The middle column in Figure \ref{fig:switch_result} shows results for the MPE task. Model-free QMIX learns the optimal policy with roughly 1m samples, while MARCO learns it in 50k, an efficiency increase of 20x.

The right column in Figure \ref{fig:switch_result} shows results for the \textit{switch with bridge crossing}. MARCO learns the optimal policy with only 50k samples. In comparison, model-free VDN requires roughly 600k samples, about 12x more than MARCO.

\begin{figure}[h!]
    \centering
    \begin{subfigure}[t]{0.75\columnwidth}

    \includegraphics[width=\columnwidth]{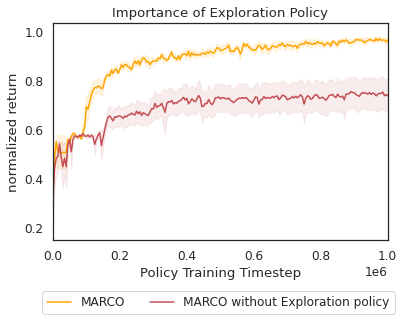}
    \caption{Importance of exploration policy.}
    \label{ab:a}
    \end{subfigure}
    
    \begin{subfigure}[t]{0.75\columnwidth}
    \includegraphics[width=\columnwidth]{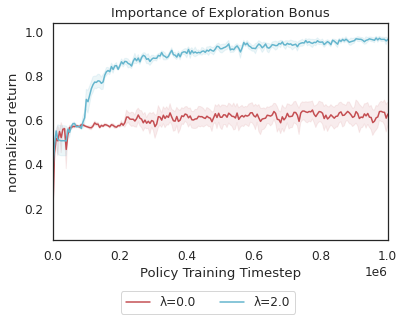}
    \caption{Importance of exploration bonus.}
    \label{ab:b}
    \end{subfigure}
    \caption{Ablation studies in the \textit{switch with bridge} task. \textbf{(a)} Without the exploration policy, MARCO agents do not learn the optimal policy.
    \textbf{(b)} Without the exploration bonus term $\tilde{r}$ (i.e. setting $\lambda$=0), MARCO's final policy is sub-optimal. 
    The error bars shown represent the standard error across 8 runs.}
    \label{fig:switch_lambda}
    \vspace{-15pt}
\end{figure}

\begin{figure}[t!]
    \centering
    \begin{subfigure}[t]{0.24\columnwidth}
    \includegraphics[width=\columnwidth]{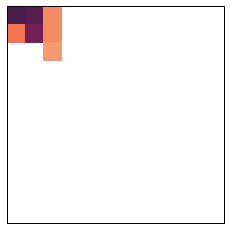}
    \caption{No exploration 50k.}
    \label{visitation:a}
    \end{subfigure}
    \begin{subfigure}[t]{0.24\columnwidth}
    \includegraphics[width=\columnwidth]{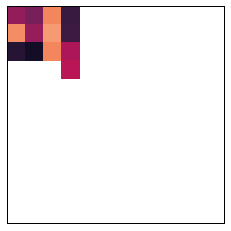}
    \caption{No exploration 100k.}
    \label{visitation:b}
    \end{subfigure}
        \begin{subfigure}[t]{0.24\columnwidth}
    \includegraphics[width=\columnwidth]{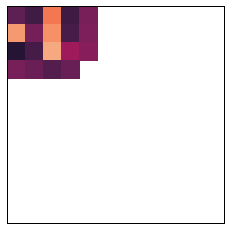}
    \caption{No exploration 150k.}
    \label{visitation:c}
    \end{subfigure}
    \begin{subfigure}[t]{0.24\columnwidth}
    \includegraphics[width=\columnwidth]{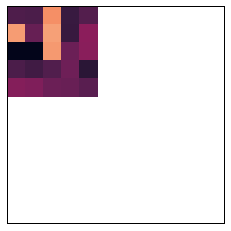}
    \caption{No exploration 200k.}
    \label{visitation:d}
    \end{subfigure}
    
    \begin{subfigure}[t]{0.24\columnwidth}
    \includegraphics[width=\columnwidth]{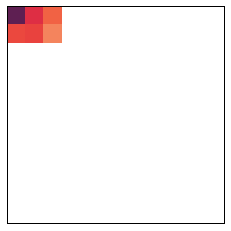}
    \caption{MARCO 50k.}
    \label{visitation:e}
    \end{subfigure}
    \begin{subfigure}[t]{0.24\columnwidth}
    \includegraphics[width=\columnwidth]{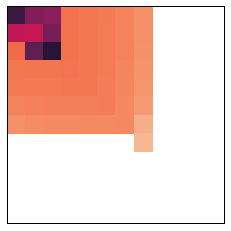}
    \caption{MARCO 100k.}
    \label{visitation:f}
    \end{subfigure}
        \begin{subfigure}[t]{0.24\columnwidth}
    \includegraphics[width=\columnwidth]{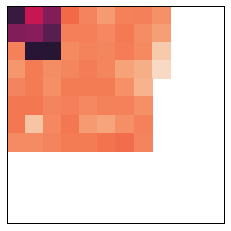}
    \caption{MARCO 150k.}
    \label{visitation:g}
    \end{subfigure}
    \begin{subfigure}[t]{0.24\columnwidth}
    \includegraphics[width=\columnwidth]{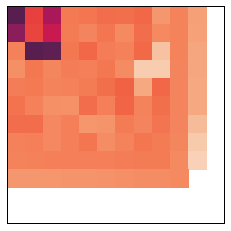}
    \caption{MARCO 200k.}
    \label{visitation:h}
    \end{subfigure}
    
    \includegraphics[width=0.7\columnwidth]{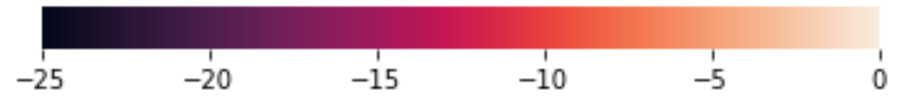}
    
    \caption{Log of the uncertainty term ($\log \tilde{r}$) of 5000 state-action pairs visited by the agents policies in the task environment either with (e-h) or without (a-d) the centralized exploration policy. The columns correspond to MARCO policies trained inside the model for 50k, 100k, 150k, and 200k timesteps, respectively. The $\log \tilde{r}$ term is calculated with the models that is available to the MARCO policy at the time of training. Darker color indicates less uncertainty, lighter color indicates more. No color (white) indicates an unvisited state-action pair. When the exploration policy is used, the agents cover much more of the state-action space starting at 100k than without the exploration policy.}
    \label{fig:visitation}
    \vspace{-15pt}
\end{figure}

\subsection{Ablation Studies} We evaluate the centralized exploration policy with two ablation studies in the \textit{switch with bridge} task.

\paragraph{Effect of using exploration policy} Figure \ref{ab:a} \footnote{The state-action space is displayed in 2D for ease of visualization} shows results comparing MARCO with an ablation without exploration policy. In the latter case, the model is learned with a dataset that is collected with the agents' current policy instead of a dedicated exploration policy. An $\epsilon$-greedy policy is used for data collection, where with a probability of $\epsilon=0.1$ the agent selects a random action from the available actions at the current step, and follows the data-collection policy otherwise.

MARCO without an exploration policy fails to solve this task because random exploration from the $\epsilon$-greedy data collection policy is insufficient to cover the relevant state-action space. If a random policy is used, at every timestep during the bridge crossing phase, there is a probability of $1-(\frac{2}{3})^3\approx0.3$ that at least one agent chooses ``End episode'' among the three available actions. Hence, the episode is very likely to terminate before the switch-riddle playing even begins, and so collecting data with a random exploration for the switch riddle task is difficult at the initial learning phase. In contrast, MARCO with centralized exploration policy overcomes this problem by actively exploring in the state-action space with high model-uncertainty, which is where agents cross the bridge and play the switch riddle. This is illustrated in Figure \ref{fig:visitation}, where we show that MARCO agents cover much more state-action space when using an exploration policy (Figure \ref{visitation:e}-\ref{visitation:h}) than otherwise (Figure \ref{visitation:a} - \ref{visitation:d}). The columns indicate the log of the uncertainty term of 5000 state-action pairs visited by MARCO’s agents policies in the task environment after training for 50k, 100k, 150k, and 200k timesteps in the model. The rollout is done within the model that is available to the agents' policies at the time. Darker color indicates less model uncertainty, lighter color indicates more. No color (white) indicates an unvisited state-action pair. 

\vspace{-0.75em}
\paragraph{Effect of using exploration bonus  $\mathbf{\tilde{r}}$} Figure \ref{ab:b} shows that when we do not use the exploration bonus term $\mathbf{\tilde{r}}$ by setting $\lambda=0$, MARCO no longer learns the optimal policy. This suggest that the exploration bonus term is essential. 

\vspace{-0.75em}

\section{Conclusion and Future Work}
\label{discussion_Section}
We presented MARCO, a model-based RL method adapted from the Dyna-style framework for sample-efficient learning in Dec-POMDPs. MARCO learns a centralized stationary model that in principle is entirely independent of the agents' policies. Within this model, policy optimization is performed using a model-free MARL algorithm of choice with no additional cost in environment samples (exploiting access to the \emph{simulated central state}). To further improve sample complexity, MARCO also learns a centralized exploration policy to collect samples in parts of the state-action space with high model uncertainty. In addition, to investigate the theoretical sample efficiency of model-based Dec-POMDP methods, we introduced the Adapted R-MAX algorithm for Dec-POMDPs, and showed that it achieves polynomial sample complexity. Finally, we showed on three cooperative communication tasks that MARCO matches the performance of state-of-the-art model-free MARL methods requiring significantly fewer samples.

We discuss the limitations of the work in two aspects. First of all, the centralized exploration policy may deviate from the agents' decentralized policies due to having access to additional information in the central state. This may cause data to be collected in parts of the state-action space that are inaccessible to the agents due to partial observability. 
Secondly, model-based methods commonly require more wall-clock time than model-free methods due to the additional model-learning step. 
However, by assumption, in our setting compute-time is cheap compared to environment interactions.

Despite these limitations, we hope that this work brings MARL one step closer to being applicable to real-world problems. In future work we will investigate how to choose a better centralized data exploration policy, 
as well as how to combine existing work in image-input single-agent MBRL to MARL settings to enable good performance even on complicated, image-based environments. 

\section{Acknowledgements}
Authors thank Wendelin Böhmer, Amir-massoud Farahmand, Keiran Paster, Claas Voelcker, and Stephen Zhao for insightful discussions and/or feedbacks on the drafts of the paper. QZ was supported by the Ontario Graduate Scholarship. AG and JF would like to acknowledge the CIFAR AI Chair award.

\clearpage
\newpage
\bibliographystyle{ACM-Reference-Format}
\bibliography{references}

\clearpage
\newpage
\onecolumn
\appendix
\section{Theoretical Results}
\subsection{Setting}

We operate in the same Dec-POMDP setup as described in Section \ref{sec:setting}, but with four additional assumptions:
\begin{itemize}
    \item Bounded rewards: $ \forall (s, \mathbf{a}), 0 \leq R(s, \mathbf{a}) \leq 1$. This allows us to introduce the notion of maximum reward and maximum value function as $R_{\max }=1$ and $V_{\max }=\frac{1}{1-\gamma}$ respectively.
    \item Finite horizon setting: All trajectories end in $H$ steps.
    \item Tabular setting: Discrete and finite number of states, actions and observations.
    \item Deterministic observation function. 
\end{itemize}

\subsection{Notation}
In addition to the setting described in Section \ref{sec:setting}, we introduce a few more notations for our sample complexity analysis:
\begin{itemize}
    \item $\hat{s} \in \hat{S}$ is the joint state-action history, i.e. $\hat{s}_2 = \{s_0, \mathbf{a}_0, s_1, \mathbf{a}_1, s_2\}$.
    \item $\hat{\tau}$ is the trajectory of joint state-action histories, i.e. $\hat{\tau}_2 = \{\hat{s}_0, \mathbf{a}_0, \hat{s}_1, \mathbf{a}_1, \hat{s}_2\}$
    \item $\mathbf{\pi}: \hat{S} \rightarrow (\mathbf{O} \times \mathbf{A})^* \rightarrow \mathbf{A}$ is a deterministic ``joint-policy’’ (see Figure \ref{fig:policy}). For simplicity, we absorb the deterministic observation function within the ``joint-policy’’ (i.e. the first mapping $\hat{S} \rightarrow (\mathbf{O} \times \mathbf{A})^*$ in $\mathbf{\pi}$ is the deterministic observation function, and the second mapping $(\mathbf{O} \times \mathbf{A})^* \rightarrow \mathbf{A}$ in $\mathbf{\pi}$ is the deterministic joint-policy we \textit{actually} learn). The second mapping is a set of decentralized policies that each maps from an individual AOH to an individual action. For the remainder of this section, when we talk about learning a near optimal joint-policy, we are referring to the actual set of decentralized joint-policies $(\mathbf{O} \times \mathbf{A})^* \rightarrow \mathbf{A}$. \textit{Hence, even though we denote $\mathbf{\pi}$ as a function that takes $\hat{s}$ as input, we are still learning decentralized policies}.
    \begin{figure*}[h!]
        \centering
        \includegraphics[width=0.6\textwidth]{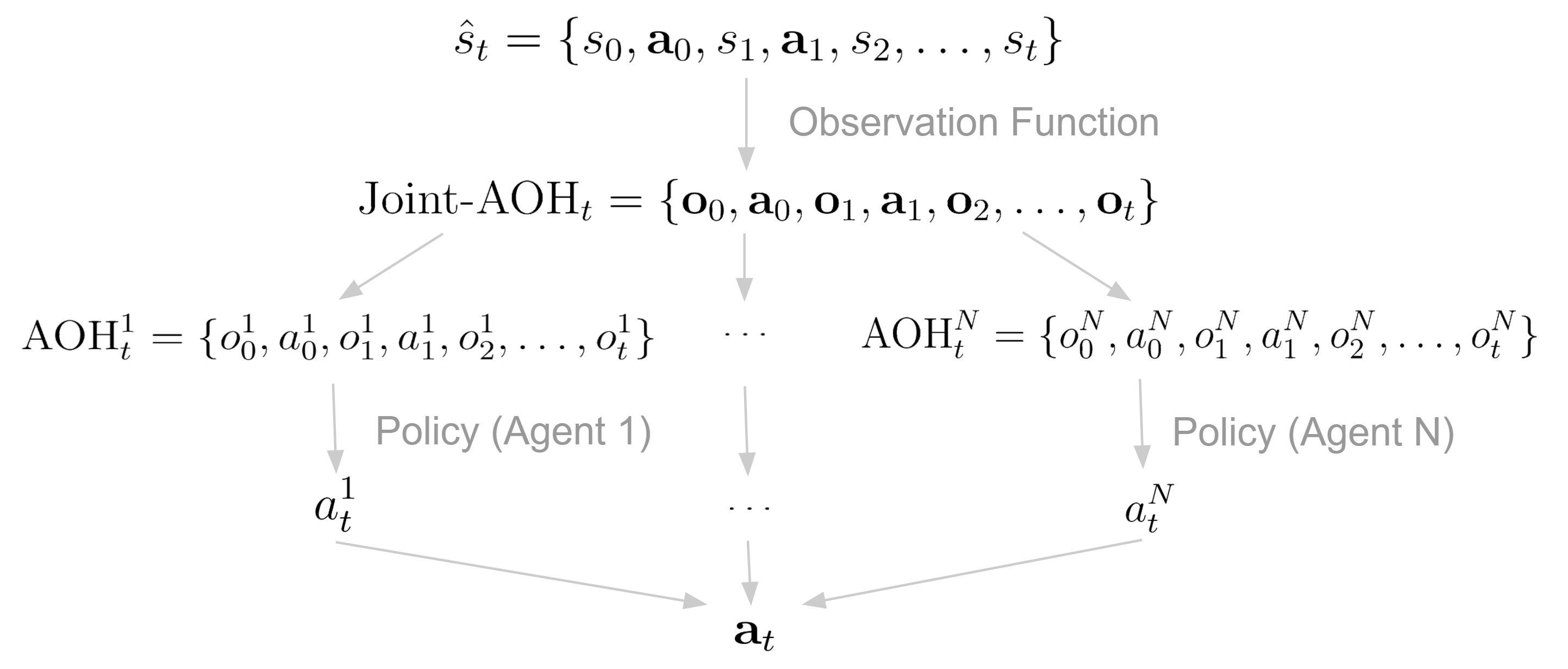}

        \caption{Illustration of the joint-policy $\mathbf{\pi}$ at timestep $t$ for a Dec-POMDP with $N$ agents.}
        \label{fig:policy}
    \end{figure*}

    \item ``$D$’’ denotes the Dec-POMDP, and ``$M$’’ denotes the centralized MDP.
    \item The relationship between reward functions in a Dec-POMDP and in its corresponding centralized MDP can be expressed as follows due to the Markov property in MDPs: $$R_D(\hat{s}, \mathbf{a}) = R_M(s, \mathbf{a}).$$
    \item Similarly, the relationship between the transition functions in a Dec-POMDP and in its corresponding centralized MDP can be expressed as follows due to the Markov property in MDPs: $$P_D(\hat{s}^\prime \mid \hat{s}, \mathbf{a}) = P_M(s^\prime \mid  s, \mathbf{a})\mathbbm{1}_\text{check history}.$$ The indicator function $\mathbbm{1}_\text{check history}$ checks whether $\hat{s}^\prime$'s previous joint state-action history is the same as $\hat{s}$, and whether the joint-action $\mathbf{a}$ is the same as $\hat{s}'$'s last joint-action. If the two conditions are not satisfied, the transition can not happen because we can neither alter nor erase history (a rule that applies to both RL and life in general). For example, if $\hat{s}^i = \{s_0^i,\mathbf{a}_0^i,s_1^i,\mathbf{a}_1^i,s_2^i\}$ and $\hat{s}^j = \{s_0^j,\mathbf{a}_0^j,s_1^j,\mathbf{a}_1^j,s_2^j, \mathbf{a}_2^j,s_3^j\}$, then,
    $$P_D(\hat{s}^j \mid \hat{s}^i, \mathbf{a}) = P_M(s_3^j \mid s_2^i, \mathbf{a}) \mathbbm{1}(s_0^i = s_0^j)\mathbbm{1}( \mathbf{a}_0^i = \mathbf{a}_0^j)\mathbbm{1}( s_1^i = s_1^j)\mathbbm{1}( \mathbf{a}_1^i = \mathbf{a}_1^j)\mathbbm{1}( s_2^i = s_2^j)\mathbbm{1}( \mathbf{a} = \mathbf{a}_2^j)$$.
    \item The value of a joint-policy $\mathbf{\pi}$ in a Dec-POMDP $D$ is defined as the expected reward obtained by following the joint-policy in $D$:
    $$J(\mathbf{\pi}) = \E_{\hat{s} \sim d_0} \left[V^\pi(\hat{s})\right]  = \E_{s\sim d_0} \left[V^\pi(s)\right].$$ 
\end{itemize}

For the remainder of the derivation, we will overload notation for simplicity. When $P$ takes $\hat{s}$ as input, we mean $P_D$, and when $P$ takes $s$ as input, we mean $P_M$. The same convention follows for $\hat{P}, R$ and $\hat{R}$.

\subsection{Algorithm}
R-MAX \citep{brafman2002r} is a model-based RL (MBRL) algorithm for learning in MDPs. In this work, we adapt R-MAX to Dec-POMDPs (see Algorithm \ref{alg:rmax}). We learn centralized models $\hat{P}_M$, $\hat{R}_M$, $\hat{O}$ using empirical estimates. Using the learned models, we construct an approximate K-known Dec-POMDP $\hat{D}_K$ (see Definition \ref{def:known}). Within $\hat{D}_K$, we evaluate all possible joint-policies $\mathbf{\pi} \in \mathbf{\Pi}$ and choose the best performing one. Our adapted R-MAX algorithm optimistically assigns rewards for all under-visited state action pairs to encourage exploration.

\subsection{Basic Lemmas and Definitions}
The rest of the derivation is heavily based on results from \citeauthor{jiang2020notes} and \citeauthor{strehl2009reinforcement}, but for the first time extends them to the Dec-POMDP setting.

\begin{lemma}[Coin Flip \citep{strehl2009reinforcement}]
\label{lem:coinflip}
Suppose a weighted coin, when flipped, has probability $p>0$ of landing with heads up. Then, for any positive integer $h$ and real number $\delta \in(0,1)$, there exists a number $W=O((h / p) \ln (1 / \delta))$ such that after $W$ tosses, with probability at least $1-\delta$, we will observe $h$ or more heads.
\end{lemma}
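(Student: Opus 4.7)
The plan is to regard the number of heads in $W$ tosses as a binomial random variable and apply a standard multiplicative Chernoff bound. Let $X \sim \mathrm{Binomial}(W, p)$ with mean $\mu = Wp$; the goal is to choose $W$ so that $\Pr[X < h] \leq \delta$ with $W = O((h/p)\ln(1/\delta))$.

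First, I would recall the one-sided Chernoff bound: for any $\epsilon \in (0,1)$, $\Pr[X \leq (1-\epsilon)\mu] \leq \exp(-\epsilon^{2}\mu/2)$. Next, I would pick $W$ large enough that $\mu \geq 2h$, so that $h \leq \mu/2$ and I can instantiate the bound at $\epsilon = 1/2$ to get $\Pr[X < h] \leq \Pr[X \leq \mu/2] \leq \exp(-\mu/8)$. Forcing this quantity to be at most $\delta$ requires $\mu \geq 8\ln(1/\delta)$; together with the earlier requirement, this is satisfied by $W = (2h + 8\ln(1/\delta))/p$, because then $\mu = 2h + 8\ln(1/\delta) \geq \max(2h,\, 8\ln(1/\delta))$ simultaneously.

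Finally, I would verify that this choice of $W$ has the asymptotic form stated in the lemma. In the relevant regime $h \geq 1$ and $\delta \leq 1/e$ (so $\ln(1/\delta) \geq 1$), we have $h\ln(1/\delta) \geq \max(h,\ln(1/\delta))$ and therefore $2h + 8\ln(1/\delta) \leq 10\, h\ln(1/\delta)$, yielding $W = O((h/p)\ln(1/\delta))$ as claimed.

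The proof is essentially a textbook concentration argument, so I do not anticipate any real obstacle. The one subtlety worth flagging is that Chernoff actually delivers the slightly sharper bound $W = O((h + \ln(1/\delta))/p)$; the form stated in the lemma is a little looser, so one simply has to confirm that it dominates in the parameter regime of interest. A fully elementary alternative, if one prefers to avoid Chernoff, is to partition the $W$ tosses into $h$ consecutive blocks of size $(1/p)\ln(h/\delta)$, lower-bound the probability of at least one head per block by $1 - (1-p)^{W/h} \geq 1 - e^{-pW/h} \geq 1 - \delta/h$, and union bound across the $h$ blocks; this yields $W = O((h/p)\ln(h/\delta))$, weaker than the stated bound by a $\ln h$ factor inside the logarithm but requiring only elementary inequalities.
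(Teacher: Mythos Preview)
Your proposal is correct and follows exactly the approach the paper indicates: the paper's own proof is the single sentence ``Results follow from the Chernoff-Hoeffding bound,'' and you have simply spelled out that application in detail. Your observation that Chernoff in fact gives the sharper $O((h+\ln(1/\delta))/p)$ is also accurate and worth noting.
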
 
\begin{proof}
    Results follow from the Chernoff-Hoeffding bound.
\end{proof}
\begin{definition}[k-Known Dec-POMDP]
    $D_K$ is the expected version of $\hat{D}_K$ where:
    
    \begin{equation}
    \begin{aligned}
        P_{K}\left(s^{\prime} \mid s, \mathbf{a}\right)=\left\{\begin{array}{lll} P\left(s^{\prime} \mid s, \mathbf{a}\right) & \text { if } (s, \mathbf{a}) \in K \\ \mathbbm{1}\left[s^{\prime}=s\right] & \text { otherwise} & \end{array}\right. &&
        \hat{P}_{K}\left(s^{\prime} \mid s, \mathbf{a}\right)= \begin{cases} \frac{n\left(s, \mathbf{a}, s^{\prime}\right)} {n(s, \mathbf{a})}, & \text { if }(s, \mathbf{a}) \in K \\ \mathbbm{1}\left[s^{\prime}=s\right], & \text { otherwise }\end{cases}
    \end{aligned}
    \end{equation}

    \begin{equation}
        \begin{aligned}
             R_{K}\left(s, \mathbf{a}\right)=\left\{\begin{array}{lll} R\left(s, \mathbf{a}\right) & \text { if } (s, \mathbf{a}) \in K \\ R_\text{max} & \text { otherwise} & \end{array}\right. &&
             \hat{R}_{K}(s, \mathbf{a})= \begin{cases}\frac{\sum_{i}^{n(s,\mathbf{a})}r(s, \mathbf{a})}{n(s,\mathbf{a})}, & \text { if }(s, \mathbf{a}) \in K \\ R_{\max }, & \text { otherwise }\end{cases}
        \end{aligned}
    \end{equation}
    
    \begin{equation}
        \begin{aligned}
             O_{K}\left(s\right)=\left\{\begin{array}{lll} O\left(s\right) & \text { if } (s, \cdot) \in K \\ \text{random observation} & \text { otherwise} & \end{array}\right. &&
             \hat{O}_{K}\left(s\right)=\left\{\begin{array}{lll} \mathbf{o} \text{ where } n\left(s,  \mathbf{o}\right) > 0 & \text { if } (s, \cdot) \in K \\ \text{random observation} & \text { otherwise} & \end{array}\right.
        \end{aligned}
    \end{equation}
    
\begin{fact}
\label{fac:obs}
Because the observation function is deterministic, we only need to see the observation once for a given $(s,\mathbf{a})$ to learn an accurate model. When a given state-action pair $\notin K$, the observation model can be anything since it does not affect the outcome. This is because whatever action the agent performs, the K-known Dec-POMDP transition function will be a self-loop, and the reward will be 1 (i.e. $R_{\max}$). 
\end{fact}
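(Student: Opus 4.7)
The plan is to prove Fact \ref{fac:obs} as two largely independent observations that together justify the definition of $\hat{O}_K$ in Definition \ref{def:known}. The first claim concerns the sample complexity of learning the observation function at a single state-action pair; the second claim concerns the irrelevance of the observation model outside $K$. Both follow from structural properties of the K-known Dec-POMDP rather than from concentration or statistical arguments, so no Hoeffding-style bound will be required.

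First, I would handle the ``one-sample-suffices'' claim. Since the observation function $O: S \times \mathbf{A} \to \mathbf{Z}$ is deterministic by assumption, for any $(s,\mathbf{a})$ there is a unique $\mathbf{o}^* \in \mathbf{Z}$ with $O(s,\mathbf{a}) = \mathbf{o}^*$. After executing $\mathbf{a}$ from $s$ exactly once we observe $\mathbf{o}^*$ and increment $n(s,\mathbf{o}^*)$ from $0$ to $1$. By the rule $\hat{O}_K(s) = \mathbf{o}$ where $n(s,\mathbf{o})>0$, the empirical observation model agrees with $O$ exactly on every visited $(s,\cdot)$. Hence no additional samples are needed to achieve zero error on the observation model at known state-action pairs, which is what ``accurate’’ means here.

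Next, I would establish that outside $K$ the observation model is irrelevant. The idea is to follow any trajectory generated under $\mathbf{\pi}$ in the K-known Dec-POMDP $D_K$ and show that, once a transition reaches some $(s,\mathbf{a}) \notin K$, the remainder of the return is deterministically $R_{\max}/(1-\gamma) = V_{\max}$ regardless of which observation $\mathbf{o}$ the agent receives at $s$. By Definition \ref{def:known}, $P_K(s'\mid s,\mathbf{a}) = \mathbbm{1}[s'=s]$ and $R_K(s,\mathbf{a}) = R_{\max}$ for every $(s,\mathbf{a}) \notin K$. I would then argue inductively on the step $h$ after entering the unknown region: at each step the state remains $s$, the reward is $R_{\max}$, and since no transition out of $s$ is possible, the agent's AOH-conditioned action choice can only affect which self-loop transition occurs, not the state, reward, or subsequent trajectory distribution. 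Thus the value function from that point is $V_{\max}$, independent of the choice of $\hat{O}_K(s)$.

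Combining the two pieces concludes the fact: inside $K$ we already model observations exactly after a single visit, and outside $K$ any choice of observation (including the ``random observation’’ placeholder in Definition \ref{def:known}) yields identical expected returns. The main subtlety, which I would be careful to handle cleanly, is that the agent's policy $\mathbf{\pi}$ is AOH-conditioned and could in principle branch on the (arbitrary) observation emitted in the unknown region; the point to emphasize is that such branching is immaterial because every branch leads back to the same absorbing state with the same reward, so the induced distribution over $(s_t, r_t)$ sequences is invariant under the choice of $\hat{O}_K$ on states outside $K$.
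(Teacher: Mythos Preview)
Your proposal is correct and follows exactly the same reasoning the paper uses. The paper treats this statement as a self-evident fact with the justification already inlined in the statement itself (determinism $\Rightarrow$ one sample suffices; outside $K$ the self-loop plus $R_{\max}$ reward make the observation irrelevant), and your two-part expansion simply makes that reasoning explicit without departing from it.
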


\end{definition}

\begin{table}[!ht]
\begin{center}
\caption{Relationship between $D, D_K$ and $\hat{D}_K$}
\begin{tabular}{|p{5em} | p{1cm} | p{2.9cm} | p{2.9cm}|} 
  \hline
    & $D$ & $D_K$ & $\hat{D}_K$ \\ 
  \hline
  Known ($K$) & $=D$ & $=D$ & $\approx D$ \\ 
  \hline
  Unkown & $=D$ & Self-loop, max reward, random observation & Self-loop, max reward, random observations \\ 
  \hline
\end{tabular}
\end{center}
\label{table:1}
\end{table}

\begin{lemma}[Reward Model Error \citep{strehl2009reinforcement}]
\label{lem:rewardmodel}
Suppose $m$ rewards are drawn independently from the reward distribution, ${R}(s, \mathbf{a})$, for state-action pair $(s, \mathbf{a})$. Let $\hat{R}(s, \mathbf{a})$ be the empirical (maximum-likelihood) estimate of ${R}(s, \mathbf{a}). $Let $\delta_{R}$ be any positive real number less than 1 . Then, with probability at least $1-\delta_{R}$, we have that $|\hat{R}(s, \mathbf{a})-R(s, \mathbf{a})| \leq \epsilon_R$, where
$$
\epsilon_R:=\sqrt{\frac{\ln \left(2 / \delta_{R}\right)}{2 m}} .
$$
\end{lemma}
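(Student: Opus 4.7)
The plan is to apply Hoeffding's inequality directly to the sample mean of $m$ i.i.d.\ bounded random variables. By the bounded-reward assumption stated at the start of the theoretical section ($0 \le R(s,\mathbf{a}) \le 1$), the $m$ independent reward samples $r_1, \ldots, r_m$ drawn from $R(s,\mathbf{a})$ are i.i.d.\ random variables taking values in $[0,1]$ with common expectation $R(s,\mathbf{a})$. The maximum-likelihood estimator is then simply the empirical mean $\hat{R}(s,\mathbf{a}) = \tfrac{1}{m}\sum_{i=1}^m r_i$, so $\mathbb{E}[\hat{R}(s,\mathbf{a})] = R(s,\mathbf{a})$.

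First I would invoke the two-sided Hoeffding bound for bounded i.i.d.\ variables on an interval $[a,b]$, namely $\Pr(|\hat{R}(s,\mathbf{a}) - R(s,\mathbf{a})| \ge \epsilon) \le 2\exp(-2m\epsilon^2/(b-a)^2)$. Specializing to $a=0$, $b=1$ (so $b-a=1$), this becomes $\Pr(|\hat{R}(s,\mathbf{a}) - R(s,\mathbf{a})| \ge \epsilon) \le 2\exp(-2m\epsilon^2)$. Next I would set the right-hand side equal to the target failure probability $\delta_R$ and invert to solve for $\epsilon$: from $2\exp(-2m\epsilon^2) = \delta_R$ we get $\epsilon = \sqrt{\ln(2/\delta_R)/(2m)}$, which is exactly the expression $\epsilon_R$ defined in the lemma statement. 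Taking complements yields $\Pr(|\hat{R}(s,\mathbf{a}) - R(s,\mathbf{a})| \le \epsilon_R) \ge 1 - \delta_R$, which is the claim.

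There is essentially no obstacle here: the result is a direct textbook application of Hoeffding's inequality, and the only prerequisites are (i) independence of the $m$ reward samples, which is hypothesized in the lemma, and (ii) that each $r_i$ lies almost surely in a known interval of length one, which is guaranteed by the bounded-reward assumption of the tabular Dec-POMDP setting. Once these are in hand, the concentration bound and the algebra to invert it into the stated form for $\epsilon_R$ are immediate.
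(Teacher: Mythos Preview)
Your proposal is correct and matches the paper's own proof, which simply states that the result follows directly from Hoeffding's bound. You have merely spelled out the details of that one-line justification, using the bounded-reward assumption $0 \le r \le 1$ and inverting the two-sided Hoeffding inequality to obtain the stated $\epsilon_R$.
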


\begin{proof}
Results follows directly from Hoeffding's bound.
\end{proof}

\begin{lemma}[Transition Model Error \citep{strehl2009reinforcement}] 
\label{lem:transitionmodel}
Suppose $m$ transitions are drawn independently from the transition distribution, ${P}(s, \mathbf{a})$, for state-action pair $(s, \mathbf{a})$. Let $\hat{P}(s, \mathbf{a})$ be the empirical (maximum-likelihood) estimate of ${P}(s, \mathbf{a}) .$ Let $\delta_P$ be any positive real number less than 1 .  Then, with probability at least $1-\delta_{P}$, we have that $\|P(s, \mathbf{a})-\hat{P}(s, \mathbf{a})\|_{1} \leq \epsilon_P$ where

$$\epsilon_P :=  \sqrt{\frac{2}{m} \ln \left[ \left(2^{S}-2\right)/ \delta_P\right]}.$$
\end{lemma}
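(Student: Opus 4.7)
The plan is to reduce the $L_1$ bound to a uniform concentration bound over binary events via the classical variational representation of total variation distance. First I would note the identity
\[
\|\hat{P}(s,\mathbf{a}) - P(s,\mathbf{a})\|_{1} \;=\; 2\sup_{A \subseteq S}\bigl(\hat{P}(A\mid s,\mathbf{a}) - P(A\mid s,\mathbf{a})\bigr),
\]
where the supremum is attained at $A^{*} = \{s' \in S : \hat{P}(s'\mid s,\mathbf{a}) > P(s'\mid s,\mathbf{a})\}$, and where the trivial subsets $\emptyset$ and $S$ contribute zero and may be excluded from the supremum.

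Next, for each fixed subset $A \subseteq S$, the deviation $\hat{P}(A\mid s,\mathbf{a}) - P(A\mid s,\mathbf{a})$ is the sample mean of the $m$ i.i.d.\ indicators $\mathbbm{1}_{A}(s_{i}')$ minus its expectation $P(A\mid s,\mathbf{a})$. Since each indicator is bounded in $[0,1]$, Hoeffding's inequality gives, for any $\epsilon > 0$,
\[
\Pr\!\bigl(\hat{P}(A\mid s,\mathbf{a}) - P(A\mid s,\mathbf{a}) \geq \epsilon/2\bigr) \;\leq\; \exp\!\bigl(-m\epsilon^{2}/2\bigr).
\]
A union bound over the $2^{S} - 2$ non-trivial subsets then yields
\[
\Pr\!\Bigl(\sup_{A}\bigl(\hat{P}(A) - P(A)\bigr) \geq \epsilon/2\Bigr) \;\leq\; (2^{S}-2)\exp\!\bigl(-m\epsilon^{2}/2\bigr),
\]
which, by the variational identity of the first step, is equivalent to $\Pr(\|\hat{P} - P\|_{1} \geq \epsilon) \leq (2^{S} - 2)\exp(-m\epsilon^{2}/2)$.

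Finally, I would invert this tail bound: setting the right-hand side equal to $\delta_{P}$ and solving for $\epsilon$ gives precisely $\epsilon_{P} = \sqrt{(2/m)\ln[(2^{S}-2)/\delta_{P}]}$, as claimed. There is no real technical obstacle — this is a textbook combination of Hoeffding with a union bound over the power set — but the only delicate points are (i) invoking the variational identity to convert the $L_{1}$ norm into a supremum over binary events (the factor of $2$ there is what produces the threshold $\epsilon/2$ per subset and the eventual factor $2/m$ under the square root), and (ii) excluding $\emptyset$ and $S$ from the union bound so that the coefficient is $(2^{S} - 2)$ rather than $2^{S}$.
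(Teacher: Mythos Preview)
Your proof is correct. The paper's own proof consists of a single sentence citing Theorem~2.1 of Weissman et~al.\ (2003), whereas you have written out an elementary self-contained argument via the variational identity $\|\hat{P}-P\|_{1}=2\sup_{A}(\hat{P}(A)-P(A))$, Hoeffding on each indicator $\mathbbm{1}_{A}$, and a union bound over the $2^{S}-2$ non-trivial subsets. This is in fact essentially the proof of the cited Weissman et~al.\ theorem, so there is no genuine methodological difference---you have simply unpacked the black-box reference. The only thing the citation buys is brevity; what your version buys is that a reader need not track down the external paper to verify the constant $2^{S}-2$ and the factor $2/m$ under the square root.
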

\begin{proof}
The result follows immediately from an application of Theorem 2.1 of Weissman et al. (2003).
\end{proof}

\begin{corollary}
\label{cor:unionbound}
Suppose for all state-action pairs $(s,\mathbf{a}) \in S \times \mathbf{A}$, $m$ transitions and $m$ rewards are drawn independently from the transition distribution $P(s,\mathbf{a})$ and the reward distribution $R(s,\mathbf{a})$ respectively. Let $\delta$ be any positive real number less than $1$. Then, with probability at least $1-\delta$, for all state-action pairs $(s,\mathbf{a})$, we have $|\hat{R}(s, \mathbf{a})-R(s, \mathbf{a})| \leq \epsilon_R$ and $\|P(s, \mathbf{a})-\hat{P}(s, \mathbf{a})\|_{1} \leq \epsilon_P$, where:
\begin{equation}
    \epsilon_R := \sqrt{\frac{\ln \left(4|S| |\mathbf{A}| / \delta\right)}{2 m}},
\end{equation}
and
\begin{equation}
    \epsilon_P :=  \sqrt{\frac{2}{m} \ln \left[ \left(2^{|S|}-2\right)2|S||\mathbf{A}|/ \delta\right]}.
\end{equation}
\end{corollary}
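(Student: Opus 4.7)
The plan is to prove the corollary as a straightforward union bound over the two lemmas already established in the excerpt (Lemmas~\ref{lem:rewardmodel} and~\ref{lem:transitionmodel}). For a single state-action pair $(s,\mathbf{a})$, Lemma~\ref{lem:rewardmodel} guarantees $|\hat R(s,\mathbf{a})-R(s,\mathbf{a})|\le \sqrt{\ln(2/\delta_R)/(2m)}$ with probability at least $1-\delta_R$, and Lemma~\ref{lem:transitionmodel} guarantees $\|\hat P(s,\mathbf{a})-P(s,\mathbf{a})\|_1\le \sqrt{(2/m)\ln((2^{|S|}-2)/\delta_P)}$ with probability at least $1-\delta_P$. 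I want both of these statements to hold simultaneously across all $|S||\mathbf{A}|$ pairs with total failure probability at most $\delta$, so the obvious move is to allot failure budget $\delta/(2|S||\mathbf{A}|)$ to each of the $2|S||\mathbf{A}|$ events.

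Concretely, I would proceed as follows. First, fix one state-action pair and instantiate Lemma~\ref{lem:rewardmodel} with $\delta_R = \delta/(2|S||\mathbf{A}|)$; plugging into the lemma's bound gives $\epsilon_R = \sqrt{\ln(4|S||\mathbf{A}|/\delta)/(2m)}$, which matches the corollary statement. Next, instantiate Lemma~\ref{lem:transitionmodel} with $\delta_P = \delta/(2|S||\mathbf{A}|)$; the lemma's bound becomes $\epsilon_P = \sqrt{(2/m)\ln((2^{|S|}-2)\cdot 2|S||\mathbf{A}|/\delta)}$, again matching the statement. Then apply a union bound over all $2|S||\mathbf{A}|$ bad events (one reward-concentration event and one transition-concentration event per state-action pair): the probability that \emph{any} of them fails is at most $2|S||\mathbf{A}| \cdot \delta/(2|S||\mathbf{A}|) = \delta$, so with probability at least $1-\delta$ \emph{all} of them hold, which is exactly the claim.

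The only subtlety worth flagging is the independence assumption required to invoke the two lemmas independently at each state-action pair. The corollary's hypothesis assumes that, for every $(s,\mathbf{a})$, we have $m$ independently drawn transitions and rewards; under this assumption the concentration bounds within a single pair are valid, and the union bound across pairs does not require any additional independence since it only uses subadditivity of probability. Hence there is no real obstacle here: the result is essentially a bookkeeping exercise that combines the two per-pair concentration lemmas with a $2|S||\mathbf{A}|$-way union bound, and the constants in $\epsilon_R,\epsilon_P$ follow by direct substitution. $\qed$
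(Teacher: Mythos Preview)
Your proposal is correct and matches the paper's proof essentially verbatim: the paper also sets $\delta_R=\delta_P=\delta/(2|S||\mathbf{A}|)$, invokes Lemmas~\ref{lem:rewardmodel} and~\ref{lem:transitionmodel} per state-action pair, and union-bounds over the $2|S||\mathbf{A}|$ events. Your additional remark that the union bound requires no cross-pair independence is a welcome clarification but not a departure from the paper's argument.
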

\begin{proof}
The result is obtained directly by applying the union bound to Lemma \ref{lem:rewardmodel} and Lemma \ref{lem:transitionmodel}, where we set $\delta_P = \delta_R = \frac{\delta}{2|S||\mathbf{A}|}$. We first split the failure probability evenly between the reward estimation events and transition estimation events. This results in the division by a factor of 2. Then, for the transition and the reward, we split $\delta/2$ evenly among all state action pairs. This results in a further division by a factor of $|S||\mathbf{A}|$.
\end{proof}

\begin{fact}[Optimism]
\label{lem:optimism}
By construction of the the Adapted R-MAX algorithm, $\forall \mathbf{\pi}: \mathbf{\hat{S}} \rightarrow \mathbf{A}, \quad J_{D_K}(\mathbf{\pi}) \geqslant J_D(\mathbf{\pi})$.
\end{fact}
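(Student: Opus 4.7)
The plan is to prove the inequality via a trajectory-level coupling argument combined with a Bellman-style induction on the remaining horizon. I couple the two systems so that a single initial state $s_0 \sim d_0$ is used and, wherever the joint-policy selects a pair $(s,\mathbf{a}) \in K$, the transitions in $D$ and $D_K$ use the same random bits, so that the sampled successor states agree. Let $\tau$ denote the first timestep at which $(s_t, \pi(\hat{s}_t)) \notin K$, with the convention $\tau = \infty$ if no such step exists. Under this coupling the two trajectories are identical up to time $\tau - 1$ and accumulate exactly the same discounted return over that prefix, so the claim reduces to comparing the expected tail returns from time $\tau$ onwards.

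To formalize the tail comparison I will set up finite-horizon value functions $V^\pi_{D,h}(\hat{s})$ and $V^\pi_{D_K,h}(\hat{s})$ as the expected discounted $h$-step return starting from history $\hat{s}$ under $\pi$, defined by the standard backward Bellman recursion, so that $J_D(\pi) = \E_{s_0 \sim d_0}[V^\pi_{D,H}(s_0)]$ and analogously for $D_K$. The inductive claim is $V^\pi_{D_K,h}(\hat{s}) \geq V^\pi_{D,h}(\hat{s})$ for every $\hat{s}$ and every $h \geq 0$. The base case $h=0$ is immediate. In the inductive step, write $\mathbf{a} = \pi(\hat{s})$ and split into two cases. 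If $(s,\mathbf{a}) \in K$, then $R_K$ and $P_K$ coincide with $R$ and $P$, and the inductive hypothesis applied at each sampled successor yields the bound directly. If $(s,\mathbf{a}) \notin K$, the instantaneous reward satisfies $R_K(s,\mathbf{a}) = R_{\max} \geq R(s,\mathbf{a})$, and $D_K$ transitions deterministically to $s$ via the self-loop while $D$ samples $s' \sim P(\cdot \mid s,\mathbf{a})$; the reward gap combined with the inductive hypothesis at the successor histories is what must close the step.

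The main obstacle is the unknown-action case of the inductive step, because in $D_K$ the successor history is the deterministic $(\hat{s}, \mathbf{a}, s)$ produced by the self-loop, whereas in $D$ the continuation averages over $s' \sim P(\cdot \mid s, \mathbf{a})$ of $V^\pi_{D,h-1}(\hat{s}, \mathbf{a}, s')$, so a naive pointwise application of the inductive hypothesis does not yield a one-sided bound. My plan to close this gap is to leverage the self-loop structure of $D_K$: either the policy continues to select unknown actions at $s$, in which case $D_K$ pays another $R_{\max}$ and remains at $s$, so unrolling the argument for the remaining $h-1$ steps recovers the geometric bound $V_{\max} = R_{\max}/(1-\gamma)$, which dominates the worst-case $V^\pi_{D,h-1}$; or the policy eventually switches back to a known action at $s$, in which case one re-couples the two systems from the known state-action pair and re-invokes the inductive hypothesis, charging the accumulated $R_{\max}$ bonuses against the best possible future return in $D$. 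Carefully bookkeeping these bonuses against worst-case future returns is the delicate piece of the argument; once the pointwise inequality $V^\pi_{D_K,h}(\hat{s}) \geq V^\pi_{D,h}(\hat{s})$ is established, taking expectations over $s_0 \sim d_0$ yields the stated fact $J_{D_K}(\pi) \geq J_D(\pi)$.
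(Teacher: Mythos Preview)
The paper does not prove this statement; it is asserted as a Fact ``by construction,'' so there is no argument to compare against. Your inductive framework is the natural one, and you have correctly located the only non-trivial step: when $(s,\mathbf{a})\notin K$, the self-loop in $D_K$ keeps the process at $s$ while $D$ moves to some $s'\sim P(\cdot\mid s,\mathbf{a})$, so the two continuation histories diverge and the inductive hypothesis cannot be applied pointwise.

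Your proposed fix for that case has a genuine gap. In the branch where the policy, after one or more self-loops in $D_K$, switches to a known action at $s$, you say one ``re-couples the two systems from the known state-action pair and re-invokes the inductive hypothesis.'' But re-coupling is impossible here: at that moment $D_K$ is at state $s$ with history $(\hat{s},\mathbf{a},s,\ldots,s)$, whereas $D$ has already walked off to some $s_k$ with an entirely different history, and the inductive hypothesis only compares the two value functions at the \emph{same} history. Indeed the inequality as stated can fail for history-dependent policies. Take three states with $s_1\xrightarrow{a_1}s_2$ and $s_1\xrightarrow{a_2}s_3$ (both reward $0$), $s_2$ absorbing with reward $1$, $s_3$ absorbing with reward $0$, and $K$ containing every pair except $(s_1,a_1)$; let $\pi$ play $a_1$ at the initial history but play $a_2$ on any longer history ending in $s_1$. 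Then $J_D(\pi)=\gamma+\gamma^2+\cdots$ while $J_{D_K}(\pi)=R_{\max}+0+0+\cdots=1$, so $J_{D_K}(\pi)<J_D(\pi)$ once $\gamma$ is close enough to $1$. The optimism claim is unproblematic for policies that are Markovian in the underlying state---there the self-loop forces the same unknown action to be replayed forever and delivers $V_{\max}$, which is the classical R-MAX argument---but for the history-dependent class the Fact quantifies over, no induction of the kind you sketch can close the gap without an additional structural assumption.
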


\begin{lemma} [Simulation Lemma for Dec-POMDPs, Part 1]
\label{lem:simulation1}
Let $D$ and $\hat{D}$ be two Dec-POMDPs that only differ in $(P, R)$ and $(\hat{P}, \hat{R})$.

Let $\epsilon_{R} \geq \max _{s, \mathbf{a}}|\hat{R}(s, \mathbf{a})-R(s, \mathbf{a})|$ and $\varepsilon_{p} \geq \max _{s, \mathbf{a}} \| \hat{P}(\cdot \mid  s, \mathbf{a}) - P(\cdot \mid s, \mathbf{a})||_{1}$. Then $\forall \mathbf{\pi}: \mathbf{\hat{S}} \rightarrow \mathbf{A}$,

$$\left\|V_{D}^{\pi}-V_{\hat{D}}^{\pi}\right\|_{\infty} \leq \frac{\varepsilon_{R}}{1-\gamma}+\frac{\gamma \epsilon_{P} V_{\max }}{2(1-\gamma)}.$$

\end{lemma}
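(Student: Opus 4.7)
The plan is to mimic the classical single-agent simulation lemma (Kearns--Singh, Strehl--Littman), but carefully keep track of the fact that the Dec-POMDP value function is indexed by joint state-action histories $\hat{s}$ rather than states $s$. The key observation that makes the reduction transparent is the identity already noted in the setup: $R_D(\hat{s},\mathbf{a})=R_M(s,\mathbf{a})$ and $P_D(\hat{s}'\mid\hat{s},\mathbf{a})=P_M(s'\mid s,\mathbf{a})\,\mathbbm{1}_{\text{check history}}$. Because the indicator depends only on $\hat{s},\mathbf{a},\hat{s}'$ and is identical in $D$ and $\hat{D}$ (they differ only in $P,R$), it will cancel out in the difference of values.

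First I would write the Bellman equation for a fixed deterministic joint-policy $\mathbf{\pi}$ in both $D$ and $\hat{D}$:
\[
V^\pi_D(\hat{s}) = R_D(\hat{s},\mathbf{\pi}(\hat{s})) + \gamma\!\!\sum_{\hat{s}'} P_D(\hat{s}'\mid\hat{s},\mathbf{\pi}(\hat{s}))\,V^\pi_D(\hat{s}'),
\]
and similarly for $\hat{D}$. Subtracting and using the $R_D{=}R_M$, $P_D{=}P_M\mathbbm{1}$ identities, the indicator factors out and I can rewrite the difference as a sum over next states $s'$ (the history bookkeeping contributes only a single admissible $\hat{s}'$ per $s'$). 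Then I add and subtract the hybrid term $\gamma \sum_{s'} P_M(s'\mid s,\mathbf{\pi}(\hat{s}))\,V^\pi_{\hat{D}}(\hat{s}')$ to split the error into a reward-difference term, a transition-difference term, and a ``propagated'' value-difference term.

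Next I would bound each piece. The reward term is at most $\varepsilon_R$ by hypothesis. For the transition term, I would use the standard total-variation trick: since $P_M(\cdot\mid s,\mathbf{a})$ and $\hat{P}_M(\cdot\mid s,\mathbf{a})$ are probability distributions, $\sum_{s'}(P_M-\hat{P}_M)(s'\mid s,\mathbf{a})=0$, so for any constant $c$,
\[
\Bigl|\sum_{s'}(P_M-\hat{P}_M)(s'\mid s,\mathbf{a})\,V^\pi_{\hat{D}}(\hat{s}')\Bigr|
=\Bigl|\sum_{s'}(P_M-\hat{P}_M)(s'\mid s,\mathbf{a})\,(V^\pi_{\hat{D}}(\hat{s}')-c)\Bigr|
\le \|P_M-\hat{P}_M\|_1\,\bigl\|V^\pi_{\hat{D}}-c\bigr\|_\infty.
\]
Choosing $c$ to be the midpoint of the range $[0,V_{\max}]$ yields $\|V^\pi_{\hat{D}}-c\|_\infty\le V_{\max}/2$, giving the factor $\varepsilon_P V_{\max}/2$ (this is exactly where the $1/2$ in the statement comes from). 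The propagated term is bounded by $\gamma\|V^\pi_D-V^\pi_{\hat{D}}\|_\infty$ because $P_M(\cdot\mid s,\mathbf{\pi}(\hat{s}))$ is a probability distribution.

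Finally, taking the supremum over $\hat{s}$ on the left and combining the three bounds gives the self-bounding inequality
\[
\|V^\pi_D-V^\pi_{\hat{D}}\|_\infty \;\le\; \varepsilon_R + \frac{\gamma\varepsilon_P V_{\max}}{2} + \gamma\,\|V^\pi_D-V^\pi_{\hat{D}}\|_\infty,
\]
which I rearrange to obtain the claimed bound. I expect the only subtlety -- and the ``main obstacle'' for a careful write-up -- to be justifying that the history indicator cancels and that summing over $\hat{s}'$ reduces to summing over $s'$; everything else is the standard MDP simulation-lemma computation applied to the centralized MDP $M$ underlying the Dec-POMDP.
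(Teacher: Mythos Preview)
Your proposal is correct and follows essentially the same approach as the paper: expand the Bellman recursion, add and subtract the hybrid term $\gamma\langle P(\hat{s},\mathbf{\pi}),V_{\hat D}^{\mathbf{\pi}}\rangle$, bound the transition-difference piece via the centering trick with $c=V_{\max}/2$ (after noting that the $\mathbbm{1}_{\text{check history}}$ factor is common to $P_D$ and $\hat P_D$ and hence cancels), and then solve the resulting self-bounding inequality. The paper's write-up differs only cosmetically, keeping the inner products in the $\hat{s}$ notation a bit longer before invoking the indicator cancellation, whereas you pass to the underlying centralized MDP $M$ immediately.
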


\begin{proof}

For all $\hat{s} \in \hat{S}$,
\begin{equation}
    \label{eq:sim1}
    \begin{aligned}
    \left| V_{\hat{D}}^{\mathbf{\pi}}(\hat{s})-V_{D}^{\mathbf{\pi}}(\hat{s})\right| & =\left| \hat{R}(\hat{s}, \mathbf{\pi})+ \gamma \left\langle\hat{P}(\hat{s}, \mathbf{\pi}), V_{\hat{D}}^{\mathbf{\pi}}\right\rangle - R(\hat{s}, \mathbf{\pi})-\gamma \left\langle P(\hat{s}, \mathbf{\pi}), V_{D}^{\mathbf{\pi}} \right\rangle \right|  \\
    &= |\hat{R}(\hat{s}, \mathbf{\pi})-R(\hat{s}, \mathbf{\pi})|+\gamma\left|\left\langle\hat{P}(\hat{s}, \mathbf{\pi}), V_{\hat{D}}^{\mathbf{\pi}}\right\rangle-\left\langle P(\hat{s}, \mathbf{\pi}), V_{D}^{\mathbf{\pi}}\right\rangle\right| \quad\quad \text{(re-arranging terms)}\\
    &\leq \varepsilon_{R} + \gamma \left( \left| \left\langle\hat{P}(\hat{s}, \mathbf{\pi}), V_{\hat{D}}^{\mathbf{\pi}}\right\rangle \textcolor{green} {-\left\langle P(\hat{s}, \mathbf{\pi}), V_{\hat{D}}^{\mathbf{\pi}}\right\rangle+\left\langle P(\hat{s}, \mathbf{\pi}), V_{\hat{D}}^\mathbf{\pi} \right\rangle} -\left\langle P(\hat{s}, \mathbf{\pi}), V_{D}^{\mathbf{\pi}}\right\rangle \right| \right) \quad \text{(add \& subtract)}\\
    &\leq \varepsilon_{R}+\gamma\left|<\hat{P}(\hat{s}, \mathbf{\pi})-P(\hat{s}, \mathbf{\pi}), V_{\hat{D}}^{\mathbf{\pi}}>\right|+\gamma\left| \left\langle P(\hat{s}, \mathbf{\pi}), V_{\hat{D}}^{\mathbf{\pi}}-V_{D}^{\mathbf{\pi}}\right\rangle \right| \\
    & \leq \varepsilon_{R}+ \gamma \left|<\hat{P}(\hat{s}, \mathbf{\pi})-P(\hat{s}, \mathbf{\pi}), V_{\hat{D}}^{\mathbf{\pi}}>\right|+\gamma{ \left|\left| V_{\hat{D}^\mathbf{\pi}} - V_D ^\mathbf{\pi} \right|\right|_{\infty}}.
    \end{aligned}
\end{equation}

Because of Fact \ref{fac:obs}, we need not deal with the observation function when writing out the value function, as we simply absorb the observation function as apart of the deterministic joint-policy. 

Since Equation \ref{eq:sim1} holds for all $\hat{s} \in \hat{S}$, we can take the infinite-norm on the left hand side:

\begin{equation}
    \label{eq:simulation_lemma}
    \begin{aligned}
    \left| \left| V_{\hat{D}}^{\mathbf{\pi}}-V_{D}^{\mathbf{\pi}}\right| \right|_{\infty} \leq \varepsilon_{R}+ \gamma \textcolor{blue}{\left|<\hat{P}(\hat{s}, \mathbf{\pi})-P(\hat{s}, \mathbf{\pi}), V_{\hat{D}}^{\mathbf{\pi}}>\right|}+\gamma{ \left|\left| V_{\hat{D}^\mathbf{\mathbf{\pi}}} - V_D ^\mathbf{\pi} \right|\right|_{\infty}}.
    \end{aligned}
\end{equation}

We then expand the blue term as follows:
\begin{equation}
    \label{eq:inner_product}
    \begin{aligned}
  \textcolor{blue}{ \left|\left\langle\hat{P}(\hat{s}, \mathbf{\pi}) - p(\hat{s}, \mathbf{\pi}), V_{\hat{D}}^{\mathbf{\pi}}\right\rangle\right|} &=\left| \left\langle\hat{P}(\hat{s}, \mathbf{\pi})-P(\hat{s}, \mathbf{\pi}), V_{\hat{D}}^{\mathbf{\pi}}-\mathbf{1} \cdot \frac{R_\text{max}}{2(\mathbf{1}-\gamma)}\right\rangle\right| \quad\quad (\text{where $\mathbf{1}$ is a vector of ones} \in \mathbbm{R}^{|S|})\\
  & \leq\|\hat{P}(\hat{s}, \mathbf{\pi})-P(\hat{s}, \mathbf{\pi})\|_{1} \cdot\left\|V_{\hat{D}}^{\mathbf{\pi}}-\mathbf{1}
  \cdot \frac{R_{\max }}{2(1-\gamma)}\right\|_{\infty} \quad\quad (\text{Holder's inequality}) \\
  & \leq \|\hat{P}(s, \mathbf{\pi}(\hat{s})) \mathbbm{1}_{\text{check history}} -P(s, \mathbf{\pi}(\hat{s}))\mathbbm{1}_{\text{check history}}\|_{1}  \cdot\left\|V_{\hat{D}}^{\mathbf{\pi}}-\mathbf{1} \cdot \frac{R_{\max }}{2(1-\gamma)}\right\|_{\infty} \quad\quad (\text{by definition of } P) \\
  & = \|\hat{P}(s, \mathbf{\pi}(\hat{s})) -P(s, \mathbf{\pi}(\hat{s}))\|_{1}  \cdot\left\|V_{\hat{D}}^{\mathbf{\pi}}-\mathbf{1} \cdot \frac{R_{\max }}{2(1-\gamma)}\right\|_{\infty}  \\
  & \leq \epsilon_P \cdot \frac{R_\text{max}}{2(1-\gamma)} \\
  & = \epsilon_P \cdot \frac {V_\text{max}}{2}.
    \end{aligned}
\end{equation}

In Equation \ref{eq:inner_product}, the step in line 1 shifts the range of $V$ from $[0, \frac{R_\text{max}}{(1-\gamma)}]$ to $[-\frac{R_\text{max}}{2(1-\gamma)}, \frac{R_\text{max}}{2(1-\gamma)}]$ to obtain a tighter bound by a factor of 2. The equality in line 1 holds because of the following, where $C$ is any constant:

\begin{equation}
\label{eq:intermediate}
\begin{aligned}
     \langle \hat{P} - P, C \cdot \mathbf{1} \rangle &= C \langle \hat{P} - P, \mathbf{1}
    \rangle \\
    &= C \left(\langle \hat{P}, \mathbf{1} \rangle - \langle P, \mathbf{1} \rangle \right) \\
    &= C (1-1) \quad \quad \text{because $P$ and $\hat{P}$ are probability distributions} \\
    &= 0
    \end{aligned}    
\end{equation}
From equation \ref{eq:intermediate}, we observe the equality in line 1 holds:
\begin{equation}
    \begin{aligned}
    \langle \hat{P} - P, V - C \cdot \mathbf{1} \rangle &= \langle \hat{P} - P, V \rangle - \langle \hat{P} - P, C \cdot \mathbf{1} \rangle \\
    &=  \langle \hat{P} - P, V \rangle - 0 \\
    & =  \langle \hat{P} - P, V\rangle.
    \end{aligned}
\end{equation}

Finally, we plug equation \ref{eq:inner_product} into equation \ref{eq:simulation_lemma} to obtain the bound.

\begin{equation}
    \begin{aligned}
    \left|\left| V_{\hat{D}}^{\mathbf{\pi}}-V_{D}^{\mathbf{\pi}}\right|\right|_{\infty} & \leq \epsilon_R + \gamma \epsilon_P \cdot \frac {V_\text{max}}{2} + \gamma \left|\left|  V_{\hat{D}}^{\mathbf{\pi}}-V_{D}^{\mathbf{\pi}}\right|\right|_{\infty} \\
    &= \frac{\epsilon_R}{1-\gamma} +  \frac {\gamma \epsilon_P V_\text{max}}{2(1-\gamma)}.
    \end{aligned}
\end{equation}

\end{proof}

\begin{lemma} [Simulation Lemma for Dec-POMDPs, Part 2]
\label{lem:simulation2}
Let $D$ and $\hat{D}$ be two Dec-POMDPs that only differ in $(P, R)$ and $(\hat{P}, \hat{R})$, then the following holds:

Let $\epsilon_{R} \geq \max _{s, \mathbf{a}}|\hat{R}(s, \mathbf{a})-R(s, \mathbf{a})|$ and $\varepsilon_{p} \geq \max _{s, \mathbf{a}} \| \hat{P}(\cdot \mid  s, \mathbf{a}) - P(\cdot \mid s, \mathbf{a})||_{1}$, then $\forall \mathbf{\pi}: \mathbf{\hat{S}} \rightarrow \mathbf{A}$,

$$\left\|V_{D}^{*}-V_{\hat{D}}^{*}\right\|_{\infty} \leq \frac{\varepsilon_{R}}{1-\gamma}+\frac{\gamma \epsilon_{P} V_{\max }}{2(1-\gamma)}.$$
\end{lemma}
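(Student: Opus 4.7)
The plan is to reduce this to Lemma \ref{lem:simulation1} (Part 1) via a standard optimality-sandwich argument, noting that the right-hand side bound is exactly the one from Part 1 that holds for every fixed policy $\mathbf{\pi}$. The key observation is that $V_D^*$ and $V_{\hat{D}}^*$ are value functions of possibly \emph{different} policies (the optimal one in $D$ versus the optimal one in $\hat{D}$), so we cannot apply Part 1 directly; instead we will use the optimality of each to swap in a common policy, upper-bounding the gap in each direction.

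Specifically, fix any $\hat{s} \in \hat{S}$, and let $\mathbf{\pi}^*$ denote an optimal joint-policy in $D$ and $\hat{\mathbf{\pi}}^*$ an optimal joint-policy in $\hat{D}$. I would handle the two sides of the absolute value separately. For the direction $V_D^*(\hat{s}) - V_{\hat{D}}^*(\hat{s})$, use $V_{\hat{D}}^*(\hat{s}) = V_{\hat{D}}^{\hat{\mathbf{\pi}}^*}(\hat{s}) \geq V_{\hat{D}}^{\mathbf{\pi}^*}(\hat{s})$ to obtain
\begin{equation*}
V_D^*(\hat{s}) - V_{\hat{D}}^*(\hat{s}) \;\leq\; V_D^{\mathbf{\pi}^*}(\hat{s}) - V_{\hat{D}}^{\mathbf{\pi}^*}(\hat{s}) \;\leq\; \|V_D^{\mathbf{\pi}^*} - V_{\hat{D}}^{\mathbf{\pi}^*}\|_\infty.
\end{equation*}
Symmetrically, for $V_{\hat{D}}^*(\hat{s}) - V_D^*(\hat{s})$, use the optimality of $\mathbf{\pi}^*$ in $D$ to obtain the analogous upper bound in terms of $\|V_D^{\hat{\mathbf{\pi}}^*} - V_{\hat{D}}^{\hat{\mathbf{\pi}}^*}\|_\infty$.

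Now I apply Lemma \ref{lem:simulation1} to each of these two fixed-policy gaps. Since Part 1 yields the bound $\tfrac{\varepsilon_R}{1-\gamma} + \tfrac{\gamma \epsilon_P V_{\max}}{2(1-\gamma)}$ uniformly over all joint-policies $\mathbf{\pi} : \hat{S} \to \mathbf{A}$, both sides are controlled by the same quantity. Taking the maximum over $\hat{s}$ and combining the two directions gives the claimed $\|\cdot\|_\infty$ bound on $V_D^* - V_{\hat{D}}^*$.

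I do not anticipate a genuine obstacle here; the only minor thing to be careful about is the policy class. Since Part 1 is stated for arbitrary $\mathbf{\pi} : \hat{S} \to \mathbf{A}$ and the optimal policies $\mathbf{\pi}^*$ and $\hat{\mathbf{\pi}}^*$ (under the deterministic observation convention absorbed into $\mathbf{\pi}$) lie in this class, Part 1 applies verbatim to both $\mathbf{\pi}^*$ and $\hat{\mathbf{\pi}}^*$, and no additional assumption on $D$ versus $\hat{D}$ is needed beyond what is already given. The whole argument is one paragraph once written out and relies only on (i) Lemma \ref{lem:simulation1}, (ii) the definition of optimality of $\mathbf{\pi}^*$ and $\hat{\mathbf{\pi}}^*$, and (iii) taking $\|\cdot\|_\infty$ at the end.
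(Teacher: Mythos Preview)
Your proposal is correct but takes a different route from the paper. The paper does \emph{not} reduce to Part~1; instead it works directly with the Bellman optimality operators $\mathcal{T}_D$ and $\mathcal{T}_{\hat D}$: it first bounds $\|V_D^* - \mathcal{T}_{\hat D} V_D^*\|_\infty = \|\mathcal{T}_D V_D^* - \mathcal{T}_{\hat D} V_D^*\|_\infty \le \epsilon_R + \gamma\epsilon_P V_{\max}/2$ via the same Holder/range-shift trick as in Part~1, and then closes with the add-and-subtract $\|V_D^* - V_{\hat D}^*\|_\infty \le \epsilon_R + \gamma\epsilon_P V_{\max}/2 + \gamma\|V_D^* - V_{\hat D}^*\|_\infty$ using the $\gamma$-contraction of $\mathcal{T}_{\hat D}$. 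Your optimality-sandwich argument is more economical because it reuses Part~1 wholesale instead of repeating its inner-product calculation; the only extra ingredient you invoke is the existence of a single policy $\pi^*$ (resp.\ $\hat\pi^*$) attaining $V_D^*(\hat s)$ (resp.\ $V_{\hat D}^*(\hat s)$) at every $\hat s$, which is standard once one views the problem as an MDP on $\hat S$. The paper's route trades that assumption for the Bellman fixed-point identity $V_D^* = \mathcal{T}_D V_D^*$ and contraction, which are equivalent standard facts. Either proof is fine; yours is shorter.
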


\begin{proof}
Let $\mathcal{T}_{D}, \mathcal{T}_{\hat{D}}$ be the Bellman update operator of $D$ and $\hat{D}$ respectively.
\begin{equation}
\begin{aligned}
\label{eq:10}
\left\|V_{D}^{*}-\mathcal{T}_{\hat{D}} V_{D}^{*}\right\|_{\infty}  = &\left\|\mathcal{T}_{D} V_{D}^{*}-\mathcal{T}_{\hat{D}} V_{D}^{*}\right\|_{\infty} \\
=& \max _{\hat{s}, \mathbf{a} \in  \hat{S} \times \mathbf{A}}\left| R(\hat{s},\mathbf{a}) + \gamma \mathbb{E}_{\hat{s}^{\prime} \sim P(\hat{s}, \mathbf{a})}\left[V_{D}^{*}\left(s^{\prime}\right)\right]- \hat{R}(\hat{s},\mathbf{a}) - \gamma \mathbb{E}_{\hat{s}^{\prime} \sim \hat{P}(\hat{s}, \mathbf{a})}\left[V_{D}^{*}\left(s^{\prime}\right)\right]\right| \\
=&  \max _{\hat{s}, \mathbf{a} \in  \hat{S} \times \mathbf{A}}\left|  R(\hat{s},\mathbf{a})  - \hat{ R}(\hat{s},\mathbf{a}) \right | + \max _{\hat{s}, \mathbf{a} \in  \hat{S} \times \mathbf{A}}\left|\gamma \mathbb{E}_{\hat{s}^{\prime} \sim P(\hat{s}, \mathbf{a})}\left[V_{D}^{*}\left(\hat{s}^{\prime}\right)\right] - \gamma \mathbb{E}_{\hat{s}^{\prime} \sim \hat{P}(\hat{s}, \mathbf{a})}\left[V_{D}^{*}\left(s^{\prime}\right)\right]\right| \\
=& \epsilon_R + \gamma \max _{\hat{s}, a \in \hat{S} \times \mathbf{A}}\left\langle P(\hat{s}, \mathbf{a})-\hat{P}(\hat{s}, \mathbf{a}), V_{D}^{*}-V_{\max } / 2 \cdot \mathbf{1}\right\rangle \quad \quad \text{where $\mathbf{1}$ is an all ones vector} \\
\leq & \epsilon_R + \gamma \max _{\hat{s}, a \in \hat{S} \times \mathbf{A}}\left\|P(\hat{s}, \mathbf{a})-\hat{P}(\hat{s}, \mathbf{a})\right\|_{1}\left\|V_{D}^{*}-V_{\max } / 2 \cdot \mathbf{1}\right\|_{\infty} \\ 
\leq & \epsilon_R +  \gamma \epsilon_P \cdot V_{\max } / 2.
\end{aligned}
\end{equation}

Therefore, 

\begin{equation}
\begin{aligned}
\left\|V_{D}^{*}-V_{\hat{D}}^{*}\right\|_{\infty} &=\left\|V_{D}^{*}-\mathcal{T}_{\hat{D}} V_{D}^{*}+\mathcal{T}_{\hat{D}} V_{D}^{*}-\mathcal{T}_{\hat{D}} V_{\hat{D}}^{*}\right\|_{\infty} \quad \quad \text{(add and subtract)}\\
& \leq \epsilon_R +  \gamma \epsilon_P \cdot V_{\max } / 2 +\left\|\mathcal{T}_{\hat{D}} V_{D}^{*}-\mathcal{T}_{\hat{D}} V_{\hat{D}}^{*}\right\|_{\infty} \quad \quad \text{(by equation \ref{eq:10})}\\
& \leq \epsilon_R +  \gamma \epsilon_P \cdot V_{\max } / 2 +\gamma\left\|V_{D}^{*}-V_{\hat{D}}^{*}\right\|_{\infty}   \quad \quad \text{(by property of $V^*$)}.
\end{aligned}
\end{equation}

\end{proof}

\begin{corollary}
\label{cor:simulation}
Following from the two Simulation Lemmas for Dec-POMDPs, we have the following:
\begin{enumerate}
    \item $\forall \mathbf{\pi}, \left|J_{\hat{D}_{k}}(\mathbf{\pi})-J_{D_ k}(\mathbf{\pi})\right| \leq  \frac{\varepsilon_{R}}{1-\gamma}+\frac{\gamma \epsilon_{P} V_{\max }}{2(1-\gamma)}.$
    \item $\forall \mathbf{\pi}, \left|J_{D_K}(\mathbf{\pi}^*_{D_K}) - J_{\hat{D}_K}(\mathbf{\pi}^*_{\hat{D}_K})\right| \leq  \frac{\varepsilon_{R}}{1-\gamma}+\frac{\gamma \epsilon_{P} V_{\max }}{2(1-\gamma)}.$
\end{enumerate}
\end{corollary}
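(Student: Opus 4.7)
The plan is to deduce both bounds directly from the two Simulation Lemmas by (i) checking that the per-state-action reward and transition error hypotheses hold uniformly between $D_K$ and $\hat{D}_K$, and (ii) converting the infinity-norm value-function bounds into expected-return bounds by taking expectations over $d_0$.

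First I would observe that $D_K$ and $\hat{D}_K$ differ only in the treatment of known state–action pairs $(s,\mathbf{a})\in K$, since by construction both collapse to a self-loop with reward $R_{\max}$ on $(s,\mathbf{a})\notin K$. Hence
\[
\max_{s,\mathbf{a}}\bigl|\hat{R}_K(s,\mathbf{a})-R_K(s,\mathbf{a})\bigr|
\;=\;\max_{(s,\mathbf{a})\in K}\bigl|\hat{R}(s,\mathbf{a})-R(s,\mathbf{a})\bigr|,
\]
and analogously for $\|\hat{P}_K(\cdot\mid s,\mathbf{a})-P_K(\cdot\mid s,\mathbf{a})\|_1$. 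But for $(s,\mathbf{a})\in K$, $m$ independent samples have been collected, so by Corollary \ref{cor:unionbound} the bounds $\varepsilon_R$ and $\varepsilon_P$ apply simultaneously to every such pair with probability at least $1-\delta$. The hypotheses of Lemma \ref{lem:simulation1} and Lemma \ref{lem:simulation2} (instantiated with $D\leftarrow D_K$ and $\hat{D}\leftarrow \hat{D}_K$) are therefore met.

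For part 1, I apply Lemma \ref{lem:simulation1} to obtain, for every joint-policy $\mathbf{\pi}$,
\[
\bigl\|V_{D_K}^{\mathbf{\pi}}-V_{\hat{D}_K}^{\mathbf{\pi}}\bigr\|_{\infty}
\;\leq\;\frac{\varepsilon_R}{1-\gamma}+\frac{\gamma\varepsilon_P V_{\max}}{2(1-\gamma)}.
\]
Since $J_{D_K}(\mathbf{\pi})=\mathbb{E}_{\hat{s}\sim d_0}[V_{D_K}^{\mathbf{\pi}}(\hat{s})]$ and likewise for $\hat{D}_K$, the triangle inequality combined with $|\mathbb{E}[X]|\le \|X\|_\infty$ yields
\[
\bigl|J_{\hat{D}_K}(\mathbf{\pi})-J_{D_K}(\mathbf{\pi})\bigr|
\;\leq\;\bigl\|V_{D_K}^{\mathbf{\pi}}-V_{\hat{D}_K}^{\mathbf{\pi}}\bigr\|_{\infty},
\]
which is exactly the claimed bound.

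For part 2, I apply Lemma \ref{lem:simulation2} with the same instantiation to get $\|V_{D_K}^{*}-V_{\hat{D}_K}^{*}\|_{\infty}$ bounded by the same quantity. Here I must be careful that $V^{*}_{D_K}$ and $V^{*}_{\hat{D}_K}$ refer to the optimal value functions under the corresponding $D_K,\hat{D}_K$, and that $J_{D_K}(\mathbf{\pi}^{*}_{D_K})=\mathbb{E}_{\hat{s}\sim d_0}[V^{*}_{D_K}(\hat{s})]$, and similarly $J_{\hat{D}_K}(\mathbf{\pi}^{*}_{\hat{D}_K})=\mathbb{E}_{\hat{s}\sim d_0}[V^{*}_{\hat{D}_K}(\hat{s})]$. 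Taking expectations and again using $|\mathbb{E}[X]|\le\|X\|_\infty$ converts the infinity-norm bound into the desired bound on $|J_{D_K}(\mathbf{\pi}^{*}_{D_K})-J_{\hat{D}_K}(\mathbf{\pi}^{*}_{\hat{D}_K})|$.

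The conceptual step that warrants the most care is the first one: verifying that the simulation lemmas can be instantiated with $(D_K,\hat{D}_K)$ in place of $(D,\hat{D})$ requires noticing that the ``unknown'' region contributes zero reward error and zero transition-distribution error (both sides agree exactly there), so only the empirical-estimation errors from Corollary \ref{cor:unionbound} on $K$ matter. Everything after that is a mechanical push of an $\|\cdot\|_\infty$ bound through the linear functional $V\mapsto \mathbb{E}_{\hat{s}\sim d_0}[V(\hat{s})]$.
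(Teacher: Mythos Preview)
Your proposal is correct and follows essentially the same approach as the paper: apply the two Simulation Lemmas with $(D_K,\hat{D}_K)$ in place of $(D,\hat{D})$, then pass from the $\|\cdot\|_\infty$ value-function bound to a bound on $|J|$ via the linear functional $V\mapsto\langle d_0,V\rangle$. The paper writes out only part (2) explicitly, using $|\langle d_0,V^*_{D_K}-V^*_{\hat{D}_K}\rangle|\le\|V^*_{D_K}-V^*_{\hat{D}_K}\|_\infty$ and then invoking Lemma~\ref{lem:simulation2}, and says part (1) is identical via Lemma~\ref{lem:simulation1}; your extra paragraph verifying that the reward and transition errors vanish outside $K$ (so the lemma hypotheses reduce to the empirical errors on $K$) is a useful clarification that the paper leaves implicit.
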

\begin{proof}
We give a proof for (2) as follows:

By definition, $J(\mathbf{\pi}) = \E_{s \sim d_0}[V^\mathbf{\pi}] = \langle d_0, V^\mathbf{\pi} \rangle$.

\begin{equation}
\begin{aligned}
 \left|J_{D_K}(\mathbf{\pi}^*_{D_K}) - J_{\hat{D}_K}(\mathbf{\pi}^*_{\hat{D}_K})\right| &= \left| \langle d_0, V^*_{D_K} \rangle  - \langle d_0, V^*_{\hat{D}_K} \rangle \right| \\
 &= \left| \langle d_0, V^*_{D_K} - V^*_{\hat{D}_K}   \rangle \right| \\
 & \leq \langle d_0, \mathbf{1} \cdot \left\|V^*_{D_K} - V^*_{\hat{D}_K} \right\|_\infty \rangle \quad \quad \text{where $\mathbf{1}$ is the all ones vector}\\
 & = \left\|V^*_{D_K} - V^*_{\hat{D}_K} \right\|_\infty \langle d_0, \mathbf{1} \rangle \\
 & = \left\|V^*_{D_K} - V^*_{\hat{D}_K} \right\|_\infty \quad \quad \text{(because $d_0$ is a probability distribution)} \\
 & \leq \frac{\epsilon_R}{1-\gamma} +  \frac {\gamma \epsilon_P V_\text{max}}{2(1-\gamma)} \quad \quad \text{(by lemma \ref{lem:simulation2})}.
\end{aligned}
\end{equation}

The proof for (1) follows from the same steps as above using Lemma \ref{lem:simulation1}.

\end{proof}

\begin{lemma} [Induced Inequality]
\label{lem:inequality}
Suppose Dec-POMDPs $D$ and $D_{k}$ has centralized transition/reward/observation functions that agree exactly on $K \subseteq S \times \mathbf{A}$. Let escape $_{K}(\hat{\tau})$ be 1 if the trajectory $\hat{\tau}$ contains at least one $(s, \mathbf{a}) \notin K$, and 0 otherwise. $\forall \mathbf{\pi}: \mathbf{\hat{S}} \rightarrow \mathbf{A}$,
$$
\left|J_{D}(\mathbf{\pi})-J_{D_K}(\mathbf{\pi})\right| \leq V_{\max } \cdot \mathbb{P}_{D}\left[\text {escape}_{K}(\hat{\tau}) \mid \mathbf{\pi}\right] .
$$
\end{lemma}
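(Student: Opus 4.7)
The plan is to prove this by a coupling argument that exploits the fact that $D$ and $D_K$ have been constructed to agree on transitions, rewards, and observations for every $(s,\mathbf{a}) \in K$. Under the joint-policy $\mathbf{\pi}$, I will couple a trajectory $\hat{\tau}_D$ drawn from $D$ with a trajectory $\hat{\tau}_{D_K}$ drawn from $D_K$ by using the same initial state $s_0 \sim d_0$ and the same random bits to sample transitions, so that the two trajectories coincide step by step as long as every encountered $(s_t,\mathbf{a}_t)$ lies in $K$. On the event $\{\mathrm{escape}_K(\hat{\tau}) = 0\}$, every visited pair is in $K$, so the full trajectories (states, joint-actions, rewards, and observations) are identical under the coupling, and hence the discounted returns are equal.

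From there I would write the value difference as an expectation over coupled trajectories and split on the escape event. Letting $G(\hat{\tau})$ denote the discounted return of a trajectory and $E = \{\mathrm{escape}_K(\hat{\tau}) = 1\}$, I would write
\begin{equation*}
J_D(\mathbf{\pi}) - J_{D_K}(\mathbf{\pi}) = \mathbb{E}\bigl[ G(\hat{\tau}_D) - G(\hat{\tau}_{D_K}) \bigr] = \mathbb{E}\bigl[ (G(\hat{\tau}_D) - G(\hat{\tau}_{D_K})) \mathbbm{1}_E \bigr],
\end{equation*}
since the integrand vanishes on $E^c$ by the coupling. Taking absolute values, applying the triangle inequality, and using that rewards lie in $[0,1]$ so that $0 \leq G(\hat{\tau}) \leq \tfrac{1}{1-\gamma} = V_{\max}$ for every trajectory (and thus $|G(\hat{\tau}_D) - G(\hat{\tau}_{D_K})| \leq V_{\max}$), I get
\begin{equation*}
|J_D(\mathbf{\pi}) - J_{D_K}(\mathbf{\pi})| \leq V_{\max} \cdot \mathbb{P}[E].
\end{equation*}
Finally, because the coupling makes the two trajectories coincide up to and including the first time $\mathrm{escape}_K$ can be decided, the event $E$ has the same probability under the $D$-marginal and the $D_K$-marginal, so $\mathbb{P}[E] = \mathbb{P}_D[\mathrm{escape}_K(\hat{\tau}) \mid \mathbf{\pi}]$, yielding the claim.

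The one subtle point, and the main thing to get right, is justifying that the coupling is well-defined when the joint-policy depends on the observation sequence rather than the state history. Here I would invoke Fact \ref{fac:obs}: the observation function is deterministic, and $O$ and $O_K$ agree whenever the visited state has appeared in a $K$-pair, so along any non-escaping trajectory the observations (and therefore the actions chosen by $\mathbf{\pi}$) are identical in $D$ and $D_K$. This means the coupled processes really do stay in lockstep on $E^c$, and the argument above goes through without having to handle the AOH-dependence of $\mathbf{\pi}$ separately. Everything else is routine: splitting on an event, bounding returns by $V_{\max}$, and applying the triangle inequality.
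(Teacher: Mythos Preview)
Your coupling argument is correct, and the underlying idea---agree on the non-escaping part, bound the remainder by $V_{\max}$---is the same one the paper uses. The paper, however, carries this out by an explicit prefix/suffix decomposition of escaping trajectories: it upper-bounds $J_{D_K}(\mathbf{\pi})$ by replacing suffix rewards with $V_{\max}$, lower-bounds $J_D(\mathbf{\pi})$ by replacing suffix rewards with $0$, and then cancels the prefix and non-escaping terms using the four agreement identities between $D$ and $D_K$. This gives only the one-sided bound $J_{D_K}(\mathbf{\pi}) - J_D(\mathbf{\pi}) \leq V_{\max}\,\mathbb{P}_D[\text{escape}]$; the paper then invokes optimism (Fact~\ref{lem:optimism}, which depends on the specific R-MAX construction of $D_K$) to conclude the absolute-value statement. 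Your coupling handles both directions symmetrically via $|G(\hat{\tau}_D)-G(\hat{\tau}_{D_K})|\leq V_{\max}$, so you never need optimism and your proof works under the lemma's stated hypotheses alone (mere agreement on $K$), not just for the R-MAX $D_K$. The trade-off is that the paper's version is more explicit about why the escape probability is the same under $D$ and $D_K$ (the prefix marginals coincide), whereas you gesture at this via the coupling; your final paragraph and the invocation of Fact~\ref{fac:obs} handle the observation-dependence of $\mathbf{\pi}$ correctly, since whenever $(s_t,\cdot)\notin K$ every action from $s_t$ is already an escape.
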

Proof. Let $R_{D}(\hat{\tau})$ denote the sum of discounted rewards in $\hat{\tau}$ according to the reward function of $D$. We can write $V_{D}^{\mathbf{\pi}}=\sum_{\hat{\tau}} \mathbb{P}_{D}[\hat{\tau} \mid \mathbf{\pi}] R_{D}(\hat{\tau})$ (and similarly for $D_K$ ). For $\hat{\tau}$ such that $\text{escape}_{K}(\hat{\tau})=1$, define pre$_{K}(\hat{\tau})$ as the prefix of $\hat{\tau}$ where every state-action pair in $\hat{\tau}$ is also in $K$, and that the next $\hat{s}$ contains one that escapes. Similarly define suf$_{K}(\hat{\tau})$ as the remainder of the episode. See Figure \ref{fig:presuf} for an illustration of an example. Let $R\left(\operatorname{pre}_{K}(\hat{\tau})\right)$ be the sum of discounted rewards within the prefix (or suffix), and $\mathbb{P}_{D_K}\left[\operatorname{pre}_{K}(\hat{\tau}) \mid \mathbf{\pi}\right]$ be the marginal probability of the prefix (or suffix) assigned by $D_K$ under the joint-policy $\mathbf{\pi}$. Because $J_{D_K} \geq J_{D}$ by optimism (Fact \ref{lem:optimism}), its sufficient to upper bound $J_{D_K}(\mathbf{\pi})-J_{D}(\mathbf{\pi})$. 

\begin{figure}[h!]
    \centering
    \includegraphics[width=0.60\columnwidth]{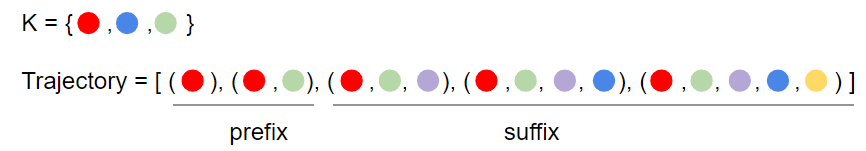}
    \caption{An example of partitioning a trajectory $\hat{\tau}$ into the prefix and the suffix. Different colours of a circle denote different state-action pairs, and each bracket in the trajectory is a $\hat{s}$. Because purple in $\hat{s}_2$ is the first time the trajectory sees a state-action pair not in the set $K$, so prefix$=\left[\hat{s}_0, \mathbf{a}_0,  \hat{s}_1,\mathbf{a}_1\right]$, and suffix$=\left[\hat{s}_2,\mathbf{a}_2, \hat{s}_3\right]$.}
    \label{fig:presuf}
\end{figure}

First, we give an upper bound on $J_{D_K}$ using the fact that the cumulative discounted reward is upper-bounded by $V_\text{max}$:
\begin{equation}
    \begin{aligned}
        J_{D_K}(\mathbf{\pi}) &=  \sum_{\hat{\tau} \text {:escape }_{K}(\hat{\tau})=1} \mathbb{P}_{D_K}[\hat{\tau} \mid \mathbf{\pi}] R_{D_K}(\hat{\tau}) + \sum_{\hat{\tau} \text {:escape}_{K}(\hat{\tau})=0} \mathbb{P}_{D_K}[\hat{\tau} \mid \mathbf{\pi}] R_{D_K}(\hat{\tau}) \\
        &= \sum_{\hat{\tau} \text {:escape}_{K}(\hat{\tau})=1} \mathbb{P}_{D_K}[\hat{\tau} \mid \mathbf{\pi}]\left(R_{D_K}\left(\operatorname{pre}_{K}(\hat{\tau})\right)+R_{D_K}\left(\operatorname{suf}_{K}(\hat{\tau})\right)\right)+\sum_{\hat{\tau} \text {:escape}_{K}(\hat{\tau})=0} \mathbb{P}_{D_K}[\hat{\tau} \mid \mathbf{\pi}] R_{D_K}(\hat{\tau}) \\
        & \leq \sum_{\hat{\tau} \text {:escape}_{K}(\hat{\tau})=1} \mathbb{P}_{D_K}[\hat{\tau} \mid \mathbf{\pi}]\left(R_{D_K}\left(\operatorname{pre}_{K}(\hat{\tau})\right)+V_{\max }\right)+\sum_{\hat{\tau} \text {:escape}_{K}(\hat{\tau})=0} \mathbb{P}_{D_K}[\hat{\tau} \mid \mathbf{\pi}] R_{D_K}(\hat{\tau})\\ 
        & \leq \sum_{\text {pre}_{K}(\hat{\tau})} \mathbb{P}_{D_K}\left[\operatorname{pre}_{K}(\hat{\tau}) \mid \mathbf{\pi}\right]\left(R_{D_K}\left(\operatorname{pre}_{K}(\hat{\tau})\right)+V_{\max }\right)+\sum_{\hat{\tau}: \text {escape}_{K}(\hat{\tau})=0} \mathbb{P}_{D_K}[\hat{\tau} \mid \mathbf{\pi}] R_{D_K}(\hat{\tau}).
    \end{aligned}
\end{equation}
First line uses the fact that trajectories falls in either category - those that contain an escape, and those that do not. The second line follows from the fact that we can split each trajectory containing at least one escape into a prefix and suffix. The third line uses the fact that the prefix trajectory's value is the same as the true Dec-POMDP $D$, and that the value of the suffix trajectory cannot be greater than $V_{\max}$. The last step uses the fact that for any $\hat{\tau}$ that shares the same pre$_{K}(\hat{\tau})$, we can combine their probabilities (because $R\left(\right.$pre$\left._{K}(\hat{\tau})\right)+V_{\max }$ does not depends on the suffix), and we get the marginal probability of the prefix. 

Similarly, we give a lower bound on $J_{D_K}$, this time using the fact that the cumulative discounted reward is lower-bounded by $0$:
\begin{equation}
    J_{D}(\mathbf{\pi}) \geq \sum_{\operatorname{pre}_{K}(\hat{\tau})} \mathbb{P}_{D}\left[\operatorname{pre}_{K}(\hat{\tau}) \mid \mathbf{\pi}\right] R_{D}\left(\operatorname{pre}_{K}(\hat{\tau})\right)+\sum_{\tau \text {:escape}_{K}(\hat{\tau})=0} \mathbb{P}_{D}[\tau \mid \mathbf{\pi}] R_{D}(\hat{\tau}).
\end{equation}

By the definition of $D_K$ and $D$, we have the following:
\begin{enumerate}
    \item $\mathbb{P}_{D_K}\left[\operatorname{pre}_{K}(\hat{\tau}) \mid \mathbf{\pi}\right] = \mathbb{P}_{D}\left[\operatorname{pre}_{K}(\hat{\tau}) \mid \mathbf{\pi}\right].$
    \item $R_{D_K}\left(\operatorname{pre}_{K}(\hat{\tau})\right) = R_{D}\left(\operatorname{pre}_{K}(\hat{\tau})\right).$
    \item $\forall \text{escape}_K(\hat{\tau})=0, \mathbb{P}_{D_K}\left[\hat{\tau}\mid \mathbf{\pi}\right] = \mathbb{P}_{D}\left[\hat{\tau}\mid \mathbf{\pi}\right].$
    \item $\forall \text{escape}_K(\hat{\tau})=0, R_{D_K}(\hat{\tau}) = R_{D}(\hat{\tau}).$
\end{enumerate}

Combining the above, we obtain the bound.
\begin{equation}
\begin{aligned}
    & J_{D_K}(\mathbf{\pi}) - J_{D}(\mathbf{\pi}) \\
    & \leq \text{UpperBound}(J_{D_K}(\mathbf{\pi})) - \text{LowerBound}(J_D(\mathbf{\pi}))\\
    & = \sum_{\text {pre}_{K}(\hat{\tau})} \mathbb{P}_{D_K}\left[\operatorname{pre}_{K}(\hat{\tau}) \mid \mathbf{\pi}\right]\left(R_{D_K}\left(\operatorname{pre}_{K}(\hat{\tau})\right)+V_{\max }\right)+\sum_{\hat{\tau}: \text {escape}_{K}(\hat{\tau})=0} \mathbb{P}_{D_K}[\hat{\tau} \mid \mathbf{\pi}] R_{D_K}(\hat{\tau}) \\ 
& \quad \quad \quad \quad - \sum_{\operatorname{pre}_{K}(\hat{\tau})} \mathbb{P}_{D}\left[\operatorname{pre}_{K}(\hat{\tau}) \mid \mathbf{\pi}\right] R_{D}\left(\operatorname{pre}_{K}(\hat{\tau})\right) - \sum_{\hat{\tau} \text {:escape}_{K}(\hat{\tau})=0} \mathbb{P}_{D}[\hat{\tau} \mid \mathbf{\pi}] R_{D}(\hat{\tau}) \\ 
    &= \sum_{\text {pre}_{K}(\hat{\tau})} \mathbb{P}_{D_K}\left[\operatorname{pre}_{K}(\hat{\tau}) \mid \mathbf{\pi}\right]\left(R_{D_K}\left(\operatorname{pre}_{K}(\hat{\tau})\right)+V_{\max }\right) -  \sum_{\text {pre}_{K}(\hat{\tau})} \mathbb{P}_{D}\left[\operatorname{pre}_{K}(\hat{\tau}) \mid \mathbf{\pi}\right]\left(R_{D}\left(\operatorname{pre}_{K}(\hat{\tau})\right)\right) \quad\quad \text{(by (3) and (4))} \\
    &=  \sum_{\text {pre}_{K}(\hat{\tau})}\mathbb{P}_{D}\left[\operatorname{pre}_{K}(\hat{\tau}) \mid \mathbf{\pi}\right] V_{\max} \quad\quad \text{(by (1) and (2))} \\
    &= \mathbb{P}_{D} \left[ \text{escape} (\hat{\tau}) \mid \mathbf{\pi}\right]  V_{\max}.
\end{aligned}    
\end{equation}

\subsection{Sample Complexity Analysis}
\begin{theorem}
Suppose that $0 \leq \varepsilon<1$ and $0 \leq \delta<1$ are two real numbers and $D$ is any Dec-POMDP. There exists inputs $m=m\left(\frac{1}{\varepsilon}, \frac{1}{\delta}\right)$ and $\varepsilon$, satisfying $m\left(\frac{1}{\varepsilon}, \frac{1}{\delta}\right)=O\left(\frac{(S+\ln (S A / \delta)) V_{\max }^{2}}{\varepsilon^{2}(1-\gamma)^{2}}\right)$ such that if the Adapted R-MAX is executed on $D$ with inputs $m$ and $\varepsilon$, then the following holds: Let $\mathbf{\pi}^*_{\hat{D}_K}$ denote the Adapted R-MAX 's joint-policy. With probability at least $1-\delta$, $J_D(\mathbf{\pi}^*) - J_D(\mathbf{\pi}^*_{\hat{D}_K}) \leq 2\epsilon$ is true for all but
$$O\left(\frac{|S||\mathbf{A}|}{(1-\gamma)^2\epsilon^3} \left(|S| + \ln(\frac{|S||\mathbf{A}|}{\delta})\right)V_{\max}^3 \ln \frac{1}{\delta}\right)$$ episodes.

\end{theorem}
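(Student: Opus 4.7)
The plan is to chain together the optimism fact, the two halves of the simulation corollary, and the induced inequality lemma in order to decompose the sub-optimality of $\mathbf{\pi}^*_{\hat{D}_K}$ on $D$ into (i) model-estimation error and (ii) the probability of escaping the known set $K$. By optimism, $J_D(\mathbf{\pi}^*) \leq J_{D_K}(\mathbf{\pi}^*) \leq J_{D_K}(\mathbf{\pi}^*_{D_K})$, where the second inequality uses that $\mathbf{\pi}^*_{D_K}$ is optimal in $D_K$. Applying both parts of the simulation corollary gives $|J_{D_K}(\mathbf{\pi}^*_{D_K}) - J_{\hat{D}_K}(\mathbf{\pi}^*_{\hat{D}_K})| \leq \Delta$ and $|J_{\hat{D}_K}(\mathbf{\pi}^*_{\hat{D}_K}) - J_{D_K}(\mathbf{\pi}^*_{\hat{D}_K})| \leq \Delta$ with $\Delta := \varepsilon_R/(1-\gamma) + \gamma \epsilon_P V_{\max}/(2(1-\gamma))$, and the induced inequality lemma yields $J_{D_K}(\mathbf{\pi}^*_{\hat{D}_K}) - J_D(\mathbf{\pi}^*_{\hat{D}_K}) \leq V_{\max} \cdot \mathbb{P}_D[\mathrm{escape}_K \mid \mathbf{\pi}^*_{\hat{D}_K}]$. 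Summing these four inequalities produces
\[ J_D(\mathbf{\pi}^*) - J_D(\mathbf{\pi}^*_{\hat{D}_K}) \;\leq\; 2\Delta + V_{\max}\, \mathbb{P}_D[\mathrm{escape}_K \mid \mathbf{\pi}^*_{\hat{D}_K}]. \]

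Next I would calibrate $m$ via the union-bound corollary so that with probability at least $1-\delta/2$ the estimates satisfy $2\Delta \leq \epsilon$. Substituting $\varepsilon_R = O(\sqrt{\ln(|S||\mathbf{A}|/\delta)/m})$ and $\epsilon_P = O(\sqrt{(|S|+\ln(|S||\mathbf{A}|/\delta))/m})$ and solving $\Delta \leq \epsilon/2$ for $m$ gives the advertised rate $m = O((|S|+\ln(|S||\mathbf{A}|/\delta))V_{\max}^2/(\epsilon^2(1-\gamma)^2))$. On this good event the chain collapses to $J_D(\mathbf{\pi}^*) - J_D(\mathbf{\pi}^*_{\hat{D}_K}) \leq \epsilon + V_{\max}\,\mathbb{P}_D[\mathrm{escape}_K \mid \mathbf{\pi}^*_{\hat{D}_K}]$, so the sub-optimality can exceed $2\epsilon$ only when the escape probability is at least $\epsilon/V_{\max}$; I shall call such an episode \emph{bad}.

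To bound the number of bad episodes I run a potential argument on $|K|$. Each escape increments the counter of some unknown pair $(s,\mathbf{a}) \notin K$, and after $m$ increments that pair is added to $K$; since there are only $|S||\mathbf{A}|$ pairs, the total number of escape events across the entire run is at most $m|S||\mathbf{A}|$. Because each bad episode produces an escape with probability at least $\epsilon/V_{\max}$, invoking the coin-flip lemma with head probability $p = \epsilon/V_{\max}$ and target $h = m|S||\mathbf{A}|$ converts this escape budget into a high-probability bound of $O((m|S||\mathbf{A}|\cdot V_{\max}/\epsilon)\ln(1/\delta))$ bad episodes, at failure probability $\delta/2$. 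Plugging in $m$ from the previous step and taking a union bound over the two failure events of total probability $\delta$ yields the stated $O(|S||\mathbf{A}|(|S|+\ln(|S||\mathbf{A}|/\delta))V_{\max}^3\ln(1/\delta)/((1-\gamma)^2\epsilon^3))$ bound on the number of bad episodes.

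The main obstacle is verifying that the MDP-style analysis transfers faithfully to the Dec-POMDP setting. Joint-policies condition on action-observation histories rather than on the Markov state, so the history space $\hat{S}$ is exponentially large and a naive argument would blow the sample bound up with it. The key reduction---already exploited inside the simulation lemmas via the $\mathbbm{1}_{\text{check history}}$ factorization of $P_D$ in terms of $P_M$---is that under a deterministic observation function the centralized estimators only need to be accurate on the truly Markov pairs $(s,\mathbf{a}) \in S \times \mathbf{A}$, so the union bound ranges over just $|S||\mathbf{A}|$ events and the known-set potential ranges over just $|S||\mathbf{A}|$ unknowns rather than over histories or over the doubly-exponential policy class. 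Everything else---H\"older's inequality inside the simulation lemma, the telescoping of value-function differences, and the prefix/suffix decomposition of the induced inequality---carries over from the MDP proof once this reduction is in place.
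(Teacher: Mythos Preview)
Your proposal is correct and mirrors the paper's proof essentially step for step: the same optimism--greedy--simulation--induced-inequality chain yielding $J_D(\mathbf{\pi}^*)-J_D(\mathbf{\pi}^*_{\hat{D}_K})\le 2\Delta + V_{\max}\,\mathbb{P}_D[\text{escape}_K]$, the same calibration of $m$ to force $2\Delta\le\epsilon$, and the same coin-flip argument with $p=\epsilon/V_{\max}$ and $h=m|S||\mathbf{A}|$ to bound the bad episodes. Your explicit $\delta/2$ split and your discussion of why the union bound and known-set potential range only over $|S||\mathbf{A}|$ Markov pairs (rather than histories) are welcome clarifications that the paper leaves implicit.
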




\begin{proof}
\begin{equation}
\label{eq:16}
\begin{aligned}
    J_D(\mathbf{\pi}^*_D) - J_D(\mathbf{\pi}^*_{\hat{D}_K}) & \leq J_{D_K}(\mathbf{\pi}^*_D) - J_D(\mathbf{\pi}^*_{\hat{D}_K}) \quad\quad \text{(by optimism in lemma \ref{lem:optimism})}\\
    & \leq J_{D_K}(\mathbf{\pi}^*_{D_K}) - J_D(\mathbf{\pi}^*_{\hat{D}_K}) \quad\quad \text{(by definition of greedy policy)}\\
    & \leq J_{\hat{D}_K}(\mathbf{\pi}^*_{\hat{D}_K}) + \frac{\varepsilon_{R}}{1-\gamma}+\frac{\gamma \epsilon_{P} V_{\max }}{2(1-\gamma)} - J_D(\mathbf{\pi}^*_{\hat{D}_K}) \quad\quad \text{(by (2) in Corollary \ref{cor:simulation} )}\\
    & \leq J_{D_K}(\mathbf{\pi}^*_{\hat{D}_K}) +2 \frac{\varepsilon_{R}}{1-\gamma}+\frac{\gamma \epsilon_{P} V_{\max }}{(1-\gamma)} - J_D(\mathbf{\pi}^*_{\hat{D}_K}) \quad\quad \text{(by (1) in orollary \ref{cor:simulation} )}\\
    &\leq 2\frac{\varepsilon_{R}}{1-\gamma}+\frac{\gamma \epsilon_{P} V_{\max }}{(1-\gamma)} + \mathbb{P}_{D}\left[ \text{escape} (\hat{\tau}) \mid \mathbf{\pi}\right]  V_{\max} \quad\quad \text{(by Lemma \ref{lem:inequality} )}.
\end{aligned}
\end{equation}

In corollary \ref{cor:simulation}, we essential say $|J_1 - J_2| \leq \Delta$. We then have the following upper bound $J_1 \leq J_2 + \Delta$. This is because either of the two cases happens:
\begin{enumerate}
    \item Case 1, $J_1 \geq J_2$: 
    $$
    \begin{aligned}
     & |J_1 - J_2|  \leq \Delta \\
     \Rightarrow & J_1 - J_2  \leq \Delta \\
     \Rightarrow & J_1  \leq J_2 + \Delta
    \end{aligned}
    $$
    \item Case 2, $J_1 < J_2 \Rightarrow  J_1  \leq  J_2 + \Delta $ because $\Delta \leq 0$.
\end{enumerate}
In line 3 and 4 of equation \ref{eq:16}, we essential just apply the upper bound $J_1 \leq J_2 + \Delta$.

To choose $m$ large enough such that $2\frac{\varepsilon_{R}}{1-\gamma}+\frac{\gamma \epsilon_{P} V_{\max }}{(1-\gamma)} \leq \epsilon$ with high probability, by Corollary \ref{cor:unionbound}, we need the following two conditions to hold:
\begin{equation}
\begin{aligned}
    \sqrt{\frac{\ln \left(4|S| |\mathbf{A}| / \delta_{R}\right)}{2 m}} \frac{2}{1-\gamma} &\leq \epsilon \\
\end{aligned}
\end{equation}

\begin{equation}
     \sqrt{\frac{2}{m} \ln \left[ \left(2^{S}-2\right)2|S||\mathbf{A}|/ \delta_P\right]} \frac{\gamma V_{\max}}{(1-\gamma)} \leq \epsilon
\end{equation}

Solving for $m$, we see that when $m \geq C V_{\max }^{2}\left(\frac{S+\ln (S \mathbf{A} / \delta)}{\varepsilon^{2}(1-\gamma)^{2}}\right) $, both conditions are satisfied ($C$ is a constant).

There are two possible cases for the remaining term $\mathbb{P}_{D}\left[ \text{escape} (\hat{\tau}) \mid \mathbf{\pi}\right]$:
\begin{enumerate}
    \item Suppose that $\mathbb{P}_{D}\left[ \text{escape} (\hat{\tau}) \mid \mathbf{\pi}\right] < \frac{\epsilon}{V_{\max}}$, then the joint-policy is 2$\epsilon$-optimal.
    \item Suppose $\mathbb{P}_{D}\left[ \text{escape} (\hat{\tau}) \mid \mathbf{\pi}\right] \geq \frac{\epsilon}{V_{\max}}$, then the following occurs: Because successful exploration occurs at most $m|S| |\mathbf{A}|$ times, by Lemma \ref{lem:coinflip}, we need $W$ episodes with attempts of exploration, where
    \begin{equation}
        \begin{aligned}
         W &= O\left(\frac{m|S||\mathbf{A}|} {\mathbb{P}_{D}\left[ \text{escape} (\hat{\tau}) \mid \mathbf{\pi}\right]} \ln \frac{1}{\delta}\right) \\
         &= O\left(\frac{|S||\mathbf{A}|}{(1-\gamma)^2\epsilon^3} \left(|S| + \ln(\frac{|S||\mathbf{A}|}{\delta})\right)V_{\max}^3 \ln \frac{1}{\delta}\right).
        \end{aligned}
    \end{equation}
\end{enumerate}

\end{proof}

\section{Experiment Details}

\subsection{Policy Optimization} Implementation of model-free baseline and MARCO is based on the \texttt{pyMARL} codebase~\cite{samvelyan19smac}.

In each experiment, all agents share the same network. To distinguish between each agent, a one-hot vector of agent ID is concatenated on to each agent's observation. The architecture of all agent networks is a DRQN with a recurrent layer consisting of a fully-connected layer, followed by a GRU with hidden dimension of 64 units, then another fully-connected layer. We use a $\gamma=0.99$ for all experiments. The replay buffer contains the most recent 5000 episodes. During training, we sample a batch of 32 episodes uniformly from the buffer each time, and the target network is updated every 200 episodes. When training the agent's policy, we encourage exploration by using an $\epsilon$-greedy policy, where $\epsilon$ anneals linearly from $1.0$ to $0.05$ over 100k training timestamps. after 100k timesteps, $\epsilon$ remains constant at $0.05$. All neural nets during policy optimization is trained using RMSprop with learning rate of $1 \times 10^{-4}$. 

The exploration policy is trained in the same manner as described above. Whenever a new model is learned, the exploration policy is trained for 100k timesteps within the model in the switch tasks, and for 50k in the MPE task. The input to the exploration policy uses the state instead of the agent's observation.

\subsection{Data Collection}
The dataset for model learning is always initialized with data collected using a randomly behaving policy. After the initialization, we use a $\epsilon$-greedy data-collection policy, where $\epsilon$ is constant with value of $0.1$. 

\subsection{Model Learning}
The dynamics model for both switch tasks is an auto-regressive model. An encoder with a single hidden layer of 500 units encodes the past state-action pair. The encoded state-action pair is then used to generate the next state using a recurrent layer composed of a fully connected layer, followed by a GRU with hidden dimension of 500 units, and two more fully connected layers. The dynamics model for the MPE task uses a fully connected neural network of two hidden layers of 500 units to predict the categorical state features and continuous state features separately. All other model components are parameterized by neural networks with two fully connected layers of 500 units. 

All model components are trained with the Adam optimizer, learning rate of $0.001$, and batch size of 1000. To prevent overfitting, we use early stopping with a maximum of 700 epochs. The training and validation dataset is split between 3:7 respectively. 

\paragraph{Switch Tasks} We learn an ensemble of 5 models over the dynamics model, and 1 model for all other models components.  We also learn a model for the available actions function to narrow down the joint-policy search space. At each timestep, the available actions function outputs a set of available actions.
$$\text{Available actions model}: \qquad  {P_\mathrm{avail}}_\psi(\{a\}_{j=1:N}|s_t)$$ The model-free baseline have access to the ground truth available actions function during both training and testing for a fair comparison.

\paragraph{MPE} We learn an ensemble of 3 models over the dynamics model and the reward model.

\subsection{Compute}
All experiments are done on a GPU of either NVIDIA T4 or NVIDIA P100. Experiments took $< 12$ hours to run.

\end{document}